\documentclass{article}
 
\usepackage{arxiv}
 
\usepackage[utf8]{inputenc}
\usepackage[T1]{fontenc}
\usepackage{hyperref}
\usepackage{url}
\usepackage{booktabs}
\usepackage{amsfonts}
\usepackage{nicefrac}
\usepackage{microtype}
\usepackage{xcolor}
\usepackage{graphicx}
\usepackage{subcaption}
\usepackage{enumitem}
\usepackage{multirow}
\usepackage{algorithm}
\usepackage{algorithmic}
\usepackage{tcolorbox}
\usepackage{amsmath}
\usepackage{fontawesome5}
\usepackage[table]{xcolor} 
\usepackage{amssymb} 
 
\usepackage{amsthm}
\theoremstyle{plain}
\newtheorem{theorem}{Theorem}[section]
\newtheorem{proposition}[theorem]{Proposition}
\newtheorem{lemma}[theorem]{Lemma}
\newtheorem{corollary}[theorem]{Corollary}
\theoremstyle{definition}
\newtheorem{definition}[theorem]{Definition}
\newtheorem{assumption}[theorem]{Assumption}
\theoremstyle{remark}

\usepackage{tcolorbox}
\tcbuselibrary{skins}

\definecolor{pipogray}{RGB}{248, 248, 248}
 
\tcolorboxenvironment{theorem}{
    enhanced,
    colback=blue!3!white,      
    colframe=blue!60!black,    
    arc=4pt,
    boxrule=0.8pt,
    left=6pt, right=6pt,
    top=6pt, bottom=6pt,
    boxsep=0pt,
    drop fuzzy shadow=gray!20
}
 
\tcolorboxenvironment{proposition}{
    enhanced,
    colback=teal!3!white,      
    colframe=teal!60!black,    
    arc=4pt,
    boxrule=0.8pt,
    left=6pt, right=6pt,
    top=6pt, bottom=6pt,
    boxsep=0pt,
    drop fuzzy shadow=gray!20
}
 
\tcolorboxenvironment{corollary}{
    enhanced,
    colback=violet!3!white,    
    colframe=violet!60!black,  
    arc=4pt,
    boxrule=0.8pt,
    left=6pt, right=6pt,
    top=6pt, bottom=6pt,
    boxsep=0pt,
    drop fuzzy shadow=gray!20
}
 
\renewcommand{\undertitle}{} 
\renewcommand{\headeright}{}

\date{}
 
\makeatletter
\renewcommand{\@maketitle}{%
  \vbox{%
    \hsize\textwidth
    \linewidth\hsize
    \vskip 0.1in
    \@toptitlebar
    \centering
    {\LARGE \@title\par}
    \@bottomtitlebar
    \vskip -0.1in
    \def\And{%
      \end{tabular}\hfil\linebreak[0]\hfil%
      \begin{tabular}[t]{c}\bf\rule{\z@}{24\p@}\ignorespaces%
    }
    \def\AND{%
      \end{tabular}\hfil\linebreak[4]\hfil%
      \begin{tabular}[t]{c}\bf\rule{\z@}{24\p@}\ignorespaces%
    }
    \begin{tabular}[t]{c}\bf\rule{\z@}{24\p@}\@author\end{tabular}%
  \vskip 0.4in \@minus 0.1in \center{\@date}   \vskip 0.2in
  }
}
\makeatother

\title{Policy Improvement Reinforcement Learning}

\author{%
  Huaiyang Wang\textsuperscript{1,3,\thanks{\raggedright Equal Contribution. \textsuperscript{\dag}Corresponding Author. For any questions, please feel free to contact: \texttt{huaiyangwang@buaa.edu.cn}, \texttt{yikunb@buaa.edu.cn}.}},~
  Xiaojie Li\textsuperscript{1,*},~
  Xiaohan Wang\textsuperscript{3},~
  Zhixia Zhang\textsuperscript{1},~
  Xiaodong Lu\textsuperscript{1,3},~
  Zixuan Huang\textsuperscript{1},~\\
  \textbf{Jiajun Chai}\textsuperscript{3},~
  \textbf{Guojun Yin}\textsuperscript{3},~
  \textbf{Deqing Wang}\textsuperscript{1},~
  \textbf{Haoyi Zhou}\textsuperscript{1},~
  \textbf{Yaodong Yang}\textsuperscript{2},~
  \textbf{Jianxin Li}\textsuperscript{1},~
  \textbf{Yikun Ban}\textsuperscript{1,\dag}\\[6pt]
  \textsuperscript{1}\textbf{\textit{Beihang University}}\qquad
  \textsuperscript{2}\textbf{\textit{Peking University}}\qquad
  \textsuperscript{3}\textbf{\textit{Meituan}}\\[4pt]
  \href{https://jacckma.github.io/pirl/}{\faGithub\ \texttt{Github Page: jacckma.github.io/pirl/}}
}

\begin{document}

\maketitle

\vspace{-1cm}

\begin{abstract}

Reinforcement learning has become a central post-training paradigm for improving LLM and agent capabilities. Yet existing RL post-training methods share a common blind spot: they update policies from rewards, advantages, or distillation targets computed on the current batch, without observing the effect of the update itself. This open-loop optimization can make local signals an unreliable proxy for actual policy improvement, especially under finite sampling, generation stochasticity, and feedback noise.
We argue that the missing ingredient is policy-improvement feedback: the ability to measure progress across policy iterations. We introduce Policy Improvement Reinforcement Learning (PIRL), which formulates expected performance gain between successive policies as an explicit objective aligned with final task performance.
Building on PIRL, we propose Policy Improvement Policy Optimization (PIPO), a plug-in closed-loop framework that verifies the previous update against a sliding-window historical performance anchor. PIPO uses this feedback to modulate the local learning signal of the base policy optimization algorithm, reinforcing updates associated with measured progress and suppressing those associated with performance drops.
We provide theoretical evidence that PIPO locally aligns policy updates with the PIRL improvement objective. Experiments on mathematical reasoning, code, tool-use, and self-distillation settings show that PIPO yields consistent gains across PPO, group-relative, and self-distillation policy optimization families.
\end{abstract}

\vspace{-0.5cm}

\begin{figure}[H]
  \centering
  \captionsetup{font=small,skip=4pt}
  \vspace{0.5cm}

  \begin{minipage}{\textwidth}
  
    \begin{subfigure}[b]{0.68\textwidth}
      \centering
      \includegraphics[width=\textwidth]{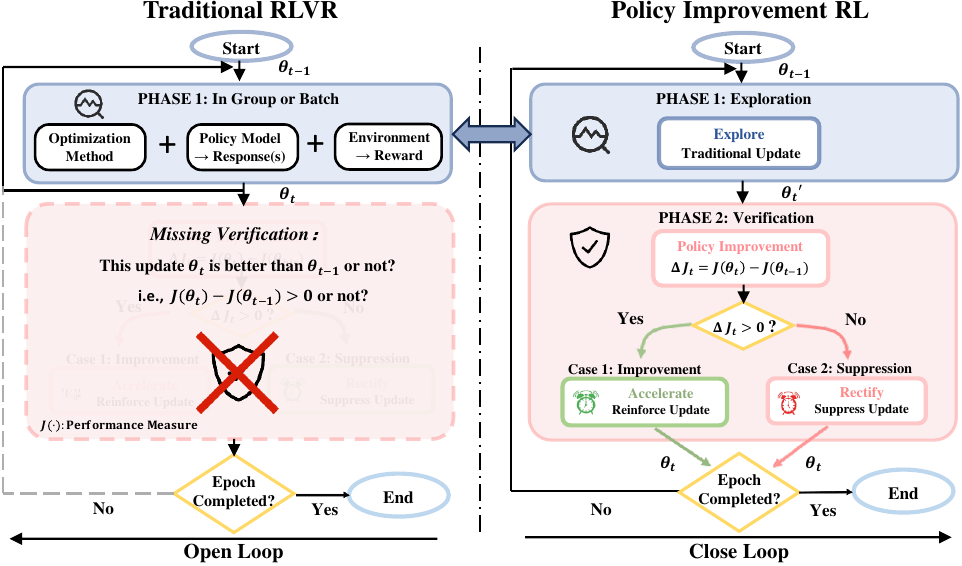}
    \end{subfigure}
    \hfill
    \begin{subfigure}[b]{0.3\textwidth}
      \centering
      \includegraphics[width=\textwidth]{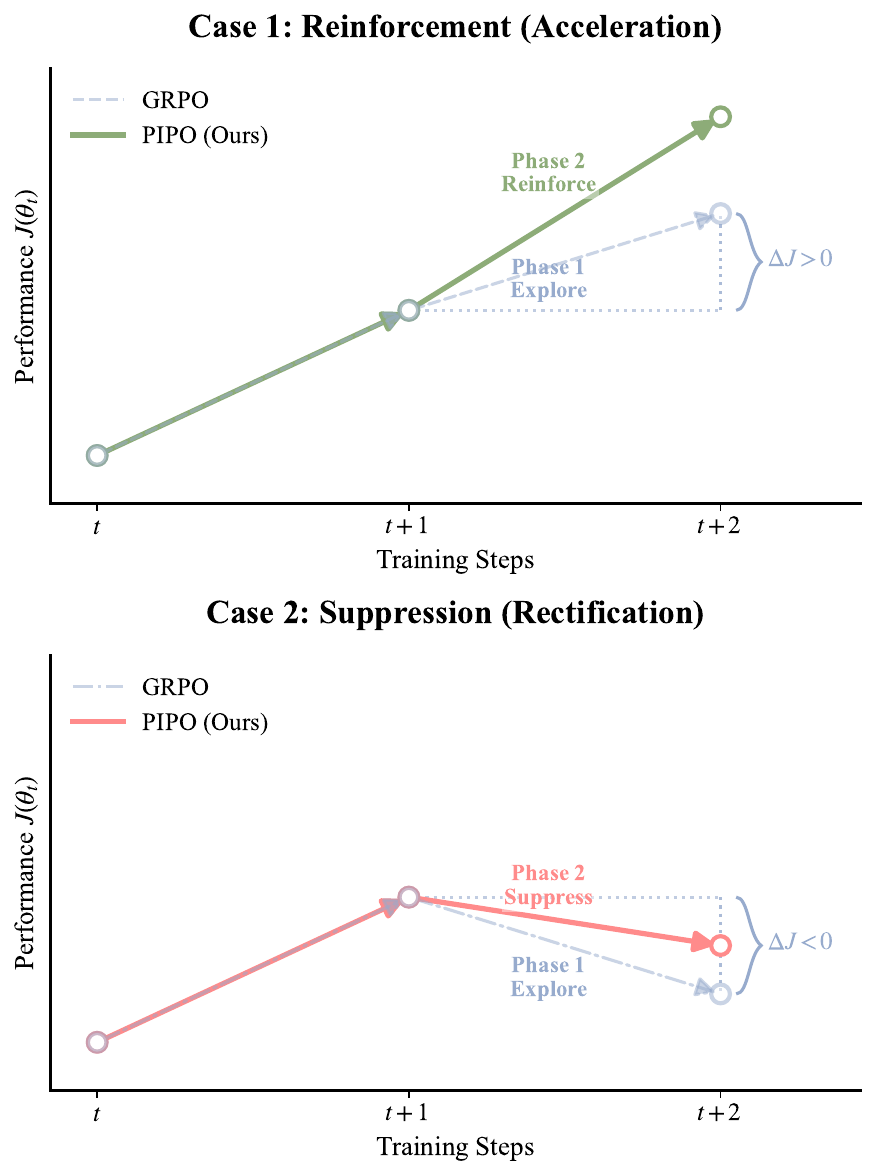}
    \end{subfigure}
    
  \end{minipage}

  \caption{\textbf{Overview of Policy Improvement Reinforcement Learning (PIRL).}
  \textbf{Left}: Existing RL post-training methods often follow an open-loop paradigm, updating policies from instantaneous rewards without verifying actual improvement.
  \textbf{Middle}: PIRL introduces a verification stage, forming a closed-loop optimization driven by policy improvement signals.
  \textbf{Right}: During verification, updates are adaptively regulated: positive signals ($\Delta J > 0$) are amplified, while negative signals ($\Delta J < 0$) trigger rectification to suppress harmful updates and stabilize training.}
  \label{fig:pipo}
  \vspace{-0.25cm}
  
\end{figure}

\section{Introduction}

Reinforcement learning has become a central post-training paradigm for improving the reasoning, coding, and tool-use capabilities of large language models and agents \cite{ban2026agent,yang2025qwen3,Guo_2025,zou2025transformer,chen2026scoring,xie2026uniarm}. A prominent line of work is reinforcement learning with verifiable rewards (RLVR), where objective rewards from answer verifiers, program execution, or tool feedback provide scalable supervision for domains such as mathematical reasoning \cite{chen2026weakdriven,li2026counterfactual} and program synthesis \cite{wu2025ucoder,zuo2026how}. Recent progress has produced several families of policy optimization methods. PPO remains a classic proximal policy optimization algorithm \cite{schulman2017proximal}. Group-relative methods such as GRPO \cite{shao2024deepseekmath} and its extensions remove the critic by normalizing rewards within groups of sampled responses.

Beyond scalar rewards, reinforcement learning with rich feedback (RLRF) uses richer feedback such as execution errors, judge comments, or tool-use traces to construct denser learning signals. A major representative direction in RLRF is on-policy distillation (OPD), where a student learns from teacher supervision on its own sampled trajectories \cite{song2026survey,li2026rethinking}. Self-distillation methods further specialize this idea by instantiating the teacher and student from the same model under different contexts \cite{hubotter2026reinforcement,zhao2026self}.

Despite their different local learning signals, these methods share a common blind spot: they update the policy from the rewards, advantages, or distillation targets computed on the current batch, without ever observing the effect of the update itself---a paradigm we refer to as \emph{open-loop} RL optimization. Improving such a local objective on sampled trajectories does not guarantee that the resulting policy is better than its predecessor~\cite{yue2025impact,wu2026distribution}, and with no mechanism to detect or correct a bad step, optimization can silently drift or collapse. Finite sampling, generation stochasticity, and feedback noise widen this gap further, making the local signal an unreliable proxy for the true effect of the updat~\cite{yang2026your,chen2026exploration,cheng2025entropy}e.

We argue that the missing ingredient is \emph{policy-improvement feedback}: a signal, revealed only by subsequent batches, that measures inter-iteration progress directly. RL post-training should not treat each update as an isolated local optimization problem, but should also ask whether a policy transition produces measurable improvement. We formalize this principle as \textbf{Policy Improvement Reinforcement Learning (PIRL)}, which directly optimizes the expected performance gain between successive policies. Because cumulative inter-iteration improvement remains aligned with final policy performance, PIRL offers a unified view that shifts RL post-training from local-signal optimization toward sustained policy improvement.

Building on PIRL, we propose \textbf{Policy Improvement Policy Optimization (PIPO)}, a plug-in \emph{closed-loop} policy optimization framework based on inter-batch policy-improvement verification. At each iteration, PIPO compares current empirical performance against a sliding-window historical anchor to evaluate whether the previous update improved the policy, and uses this feedback to modulate the local learning signal of the base policy optimization algorithm: improvements are reinforced, while performance drops are suppressed. In this way, PPO, group-relative optimization, and self-distillation optimization methods can be transformed from open-loop updates driven solely by local signals into closed-loop optimization with policy-improvement feedback.

Our contributions can be summarized as follows: (1) \textbf{New Paradigm:} We identify the absence of policy improvement feedback as a common limitation across RL post-training methods, and introduce \emph{Policy Improvement Reinforcement Learning}, which treats policy improvement as an explicit objective aligned with final performance. (2) \textbf{General Algorithm:} We propose PIPO, a unified closed-loop framework that can be plugged into PPO, group-relative methods such as GRPO, GSPO, and DAPO, and self-distillation methods such as SDPO. (3) \textbf{Theoretical and Empirical Evidence:} We analyze how policy-improvement feedback aligns with local policy updates, study group-relative optimization as an illustrative sensitivity case, and show empirically that PIPO yields consistent gains across mathematical reasoning, code, tool-use, and self-distillation settings.

\section{Preliminaries}
\label{sec:preliminaries}

We first define a standard RL post-training objective. For a query $q \sim \mathcal{D}$, a policy $\pi_\theta(\cdot \mid q)$ generates a response $y$, and the task provides a reward $R(q,y)$. The policy maximizes the expected reward:
\begin{equation}
\label{eq:rl_objective}
J_{\mathrm{RL}}(\theta)
= \mathbb{E}_{q \sim \mathcal{D}} \left[ \mathbb{E}_{y \sim \pi_\theta(\cdot \mid q)} R(q,y) \right].
\end{equation}
For a fixed query $q$, let
\begin{equation*}
p(q;\theta) = \mathbb{E}_{y \sim \pi_\theta(\cdot \mid q)}[R(q,y)],
\end{equation*}
so the ideal RL gradient is
\begin{equation*}
g_{\mathrm{ideal}} = \nabla_\theta J_{\mathrm{RL}}(\theta) = \mathbb{E}_{q \sim \mathcal{D}} \left[ \nabla_\theta p(q;\theta) \right].
\end{equation*}

We next introduce three representative policy optimization families considered in this work:

\paragraph{Proximal Policy Optimization.}
PPO uses a clipped trust-region surrogate. For a sampled response $y$, a common shaped reward is
\begin{equation*}
\widetilde{R}(q,y)
=
R(q,y)
-
\beta \log \frac{\pi_{\theta}(y \mid q)}{\pi_{\mathrm{ref}}(y \mid q)},
\end{equation*}
where $\beta$ controls reference-policy regularization. PPO estimates token-level advantages with generalized advantage estimation (GAE):
\begin{equation}
\label{eq:ppo_advantage_std}
A_{\tau}^{\mathrm{PPO}}
=
\sum_{\ell \ge 0}(\gamma\lambda_{\mathrm{GAE}})^\ell
\delta_{\tau+\ell},
\quad
\delta_{\tau}=r_{\tau}+\gamma V_\psi(s_{\tau+1})-V_\psi(s_{\tau}).
\end{equation}
where $s_\tau=(q,y_{<\tau})$ denotes the decoding state and $r_\tau$ denotes the shaped token-level reward. Given this advantage, PPO optimizes
\begin{equation*}
\mathcal{J}_{\mathrm{PPO}}(\theta)
=
\mathbb{E}_{q \sim \mathcal{D},\, y \sim \pi_{\theta_{\mathrm{old}}}(\cdot \mid q)}
\left[
\frac{1}{|y|}
\sum_{\tau=1}^{|y|}
\min\left(
\rho_\tau(\theta) A_\tau^{\mathrm{PPO}},
\mathrm{clip}(\rho_\tau(\theta), 1-\epsilon, 1+\epsilon) A_\tau^{\mathrm{PPO}}
\right)
\right],
\end{equation*}
where $\rho_\tau(\theta)=\frac{\pi_\theta(y_\tau \mid q,y_{<\tau})}{\pi_{\theta_{\mathrm{old}}}(y_\tau \mid q,y_{<\tau})}$ is the importance-sampling ratio.

\paragraph{Group-Relative Policy Optimization.}
For group-relative optimization, we use GRPO as the representative instantiation. For each query $q$, GRPO samples $G$ responses $y_i \sim \pi_{\theta_{\mathrm{old}}}(\cdot \mid q)$ with rewards $R_i := R(q,y_i)$, and constructs a critic-free advantage by normalizing rewards within the group:
\begin{equation}
\label{eq:grpo_advantage_std}
A_i^{\mathrm{GRPO}}=\frac{R_i-\mu_q}{\sigma_q}, \quad
\mu_q=\frac{1}{G}\sum_{i=1}^G R_i,\quad
\sigma_q=\sqrt{\frac{1}{G}\sum_{i=1}^G(R_i-\mu_q)^2}.
\end{equation}
The GRPO objective is
\begin{equation*}
\mathcal{J}_{\text{GRPO}}(\theta)
=
\mathbb{E}_{q \sim \mathcal{D},\, \{y_i\}_{i=1}^{G} \sim \pi_{\theta_{\mathrm{old}}}(\cdot \mid q)^G}
\bigg[
\frac{1}{G} \sum_{i=1}^G
\frac{1}{|y_i|}\sum_{\tau=1}^{|y_i|}
\min \Big(
\rho_{i,\tau}(\theta) A_i^{\mathrm{GRPO}},
\text{clip}(\rho_{i,\tau}(\theta), 1-\epsilon, 1+\epsilon) A_i^{\mathrm{GRPO}}
\Big)
\bigg],
\end{equation*}
where $\rho_{i,\tau}(\theta)=\frac{\pi_\theta(y_{i,\tau} \mid q,y_{i,<\tau})}{\pi_{\theta_{\mathrm{old}}}(y_{i,\tau} \mid q,y_{i,<\tau})}$ is the importance-sampling ratio. The response-level advantage $A_i^{\mathrm{GRPO}}$ is broadcast to all tokens in response $y_i$.

\paragraph{Self-Distillation Policy Optimization.}
We use SDPO~\cite{hubotter2026reinforcement} as our self-distillation baseline. Given a query $q$, the current policy first acts as a student and generates a response $y \sim \pi_\theta(\cdot \mid q)$. After receiving privileged information $f$, such as execution errors, a successful rollout, or a ground-truth solution, the same policy is re-evaluated as a feedback-conditioned self-teacher. Let $q_\theta(\cdot \mid q,f,y_{<\tau})$ denote this self-teacher distribution. SDPO distills the self-teacher into the student by minimizing a token-level reverse KL objective:
\begin{equation*}
\mathcal{L}_{\mathrm{SDPO}}(\theta)
=
\mathbb{E}_{q,y,f}
\left[
\sum_{\tau=1}^{|y|}
D_{\mathrm{KL}}
\left(
\pi_{\theta}(\cdot \mid q,y_{<\tau})
\;\|\;
\mathrm{stopgrad}\!\left(q_{\theta}(\cdot \mid q,f,y_{<\tau})\right)
\right)
\right],
\end{equation*}
which provides dense token-level attribution through the self-teacher/student log-probability difference:
\begin{equation}
\label{eq:sdpo_attribution_std}
A_{\tau}^{\mathrm{SDPO}}(y_\tau)
=
\log q_{\theta}(y_\tau \mid q,f,y_{<\tau})
-
\log \pi_{\theta}(y_\tau \mid q,y_{<\tau}).
\end{equation}

Taken together, PPO, GRPO, and SDPO differ in how they construct local attribution: GAE advantages, group-relative advantages, and self-teacher log-ratio advantages, respectively. However, after applying the update, these algorithms do not explicitly check whether optimizing the sampled trajectories and scores has produced a better policy. This shared open-loop structure motivates PIRL, which treats policy improvement as an explicit feedback signal for controlling policy learning.

\section{Policy Improvement Reinforcement Learning}
\label{sec:pirl}

Existing RL post-training algorithms differ in how they assign local credit, but most of them optimize each update in isolation. To address this limitation, we introduce \emph{policy improvement} as a feedback mechanism, shifting the optimization paradigm from open-loop updates to \emph{closed-loop} inter-iteration optimization. Specifically, we propose Policy Improvement Reinforcement Learning (PIRL), a learning framework that directly optimizes performance gain between successive policies. Unlike conventional surrogate objectives derived from instantaneous reward, PIRL leverages inter-iteration progress as its primary learning signal. We begin by formalizing the measurement of this improvement.

\begin{definition}[Policy Improvement]
\label{def:policy_improvement}
Let $J_{\mathrm{RL}}(\theta)$ denote the expected performance defined in Eq.~\eqref{eq:rl_objective}. For a given policy update from $\theta_{t}$ to $\theta_{t+1}$, we define the \emph{policy improvement} at iteration $t$ as the step-wise performance gain:
\begin{equation}
\label{eq:policy_improvement}
\Delta J_t := J_{\mathrm{RL}}(\theta_{t+1}) - J_{\mathrm{RL}}(\theta_{t}).
\end{equation}
\end{definition}

Rather than optimizing an isolated surrogate objective at each iteration, PIRL seeks a trajectory of policies that achieves consistent improvement over time.

\begin{definition}[PIRL Objective]
\label{def:pirl_objective}
Consider an optimization process over $T$ iterations. The PIRL objective seeks a sequence of policies $\{\theta_t\}_{t=1}^T$ that maximizes the cumulative expected policy improvement:
\begin{equation}
\label{eq:pirl_objective}
\max_{\{\theta_t\}_{t=1}^T} \sum_{t=1}^T \mathbb{E}\!\left[ \Delta J_t \right].
\end{equation}
\end{definition}

We now demonstrate that this objective is structurally consistent with the global RL objective.

\begin{theorem}
[Objective Alignment]
\label{prop:pirl_alignment}
For a fixed initialization $\theta_0$, maximizing the PIRL objective exactly maximizes the final policy performance: 
\[
\arg\max_{\{\theta_t\}_{t=1}^T} \sum_{t=1}^T \mathbb{E}\!\left[ \Delta J_t \right] \;=\; \arg\max_{\theta_T} J_{\mathrm{RL}}(\theta_T).
\]
\end{theorem}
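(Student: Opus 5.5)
The plan is to telescope the sum. By Definition~\ref{def:policy_improvement}, $\Delta J_t = J_{\mathrm{RL}}(\theta_{t+1}) - J_{\mathrm{RL}}(\theta_t)$. The indexing in Eq.~\eqref{eq:pirl_objective} runs $t=1,\dots,T$, which would involve $\theta_{T+1}$, while the theorem speaks of $\theta_T$ and a fixed $\theta_0$. I will therefore first fix a consistent convention: the sum covers the $T$ transitions $\theta_0\to\theta_1\to\cdots\to\theta_T$, i.e.\ $\Delta J_{t}$ is read as $J_{\mathrm{RL}}(\theta_t)-J_{\mathrm{RL}}(\theta_{t-1})$ for $t=1,\dots,T$. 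Under the other indexing the argument is identical after shifting the index by one. Stating this convention explicitly is the main point of care, since the statement is otherwise purely algebraic.

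With this convention, linearity of expectation gives
\[
\sum_{t=1}^T \mathbb{E}[\Delta J_t] = \mathbb{E}\Big[\sum_{t=1}^T \big(J_{\mathrm{RL}}(\theta_t)-J_{\mathrm{RL}}(\theta_{t-1})\big)\Big] = \mathbb{E}[J_{\mathrm{RL}}(\theta_T)] - J_{\mathrm{RL}}(\theta_0).
\]
The expectation is over any randomness in the iterates, such as sampled batches. When the $\theta_t$ are deterministic choices, as in the argmax formulation, the expectation is trivial.

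Because $\theta_0$ is fixed, $J_{\mathrm{RL}}(\theta_0)$ is a constant independent of the decision variables $\{\theta_t\}_{t=1}^T$. The PIRL objective is therefore $F(\theta_1,\dots,\theta_T) = J_{\mathrm{RL}}(\theta_T) - c$. This depends only on the last coordinate, and the intermediate iterates are unconstrained in the optimization.

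It then follows that $(\theta_1^*,\dots,\theta_T^*)$ maximizes $F$ if and only if $\theta_T^*\in\arg\max_{\theta} J_{\mathrm{RL}}(\theta)$, with the intermediate iterates arbitrary. I will state the equality of argmax sets in this sense: the projection onto the $\theta_T$ coordinate of the PIRL maximizers equals $\arg\max_{\theta_T}J_{\mathrm{RL}}(\theta_T)$. Both directions follow because subtracting a constant preserves maximizers.

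If one instead restricts the sequences to those generated by an update rule, the same identity shows that maximizing cumulative improvement is equivalent to maximizing the expected final performance over that feasible set. I will add this as a remark.
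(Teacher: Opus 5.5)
Your proposal is correct and follows the paper's proof: telescope $\sum_{t=1}^T \Delta J_t$ to $J_{\mathrm{RL}}(\theta_T) - J_{\mathrm{RL}}(\theta_0)$, then drop the constant $J_{\mathrm{RL}}(\theta_0)$. The paper's proof silently uses the same convention $\Delta J_t = J_t - J_{t-1}$ that you state explicitly, and your projection reading of the argmax equality makes precise a point the paper leaves implicit.
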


As demonstrated by Theorem \ref{prop:pirl_alignment}, explicitly maximizing $\Delta J_t$ gives a temporal view of RL that remains aligned with final task performance. PIRL therefore provides the principle behind closed-loop optimization: the policy-learning procedure should not only ask which samples receive positive credit under a local surrogate, but also whether the resulting policy transition produces measurable progress. 

\section{Policy Improvement Policy Optimization}
\label{sec:pipo_algorithm}

Building upon the theoretical foundation of PIRL, we present Policy Improvement Policy Optimization (PIPO), a practical algorithm that closes the loop around policy updates. PIPO keeps the local credit assignment of a base RL algorithm, but adds a global verification signal that measures whether the previous policy transition produced a better policy in empirical evaluation. This feedback does not replace local credit assignment; it verifies the effect of the previous update. This yields a plug-in mechanism for PPO, group-relative policy optimization, and self-distillation policy optimization.

\subsection{Policy Improvement Reward}
\label{subsec:improvement_reward}

We now describe how PIPO turns retrospective policy-improvement feedback into a training signal for the previous policy transition. The key idea is to evaluate a policy update using trajectories sampled from the updated policy itself: if the new policy produces higher verifier rewards after the transition, the preceding update is treated as useful evidence for policy improvement. At iteration $t$, PIPO estimates the instantaneous policy performance $J_{\mathrm{RL}}(\theta_t)$ using the mean verifier reward $\mu_t$ over a freshly sampled batch $\mathcal{B}_t$:

\begin{equation}
\label{eq:mu_t}
\mu_t
:=
\frac{1}{|\mathcal{B}_t|}
\sum_{(q,y)\in \mathcal{B}_t} R(q,y),
\qquad
y \sim \pi_{\theta_{t}}(\cdot \mid q).
\end{equation}

Here, $\mu_t$ serves as empirical evidence for the quality of the policy update from the previous iteration. To obtain a more robust reference for recent policy performance, PIPO compares $\mu_t$ with a sliding-window historical anchor. We compute the historical mean $\mu_{\mathrm{his}}$ and standard deviation $\sigma_{\mathrm{his}}$ over the past $K$ iterations as:

\begin{equation}
\label{eq:global_stats}
\mu_{\mathrm{his}} = \frac{1}{K} \sum_{k=1}^K \mu_{t-k}, \qquad
\sigma_{\mathrm{his}} = \sqrt{\frac{1}{K-1} \sum_{k=1}^K (\mu_{t-k} - \mu_{\mathrm{his}})^2}.
\end{equation}

The difference $\mu_t - \mu_{\mathrm{his}}$ provides a practical estimate of whether the latest policy transition improves over recent empirical performance. We normalize this empirical improvement by the historical standard deviation to obtain the \emph{standardized improvement feedback}:
\begin{equation}
\label{eq:xi_def}
\xi_t := \frac{\mu_t - \mu_{\mathrm{his}}}{\sigma_{\mathrm{his}}}.
\end{equation}
Specifically, $\xi_t > 0$ provides evidence that the preceding policy update produced a genuine performance gain and should be reinforced. Conversely, $\xi_t \le 0$ indicates a possible performance decline and triggers the closed-loop mechanism to suppress or rectify the update. Since this scalar feedback does not specify sample- or token-level credit, PIPO uses the base algorithm's attribution scores as the local basis for modulation.

\begin{definition}[Policy Improvement Reward]
\label{def:pi_reward}
For each historical learning unit $u_{t-1,i}$ generated during the previous iteration $t-1$ (e.g., a sampled response for PPO or group-relative methods, or a token/step for self-distillation), let $a_{t-1,i}$ denote its local attribution score. The \emph{Policy Improvement (PI) Reward} evaluated at iteration $t$ is constructed as:
\begin{equation}
\label{eq:meta_reward_def}
\hat{r}^{\mathrm{PI}}_{t,i}
:=
\underbrace{a_{t-1,i}}_{\text{Local Attribution}}
\;\cdot\;
\underbrace{\phi_{\lambda}(\xi_t)}_{\text{Improvement Feedback}},
\end{equation}

where the choice of $a_{t-1,i}$ depends on the base RL algorithm. We map $\xi_t$ to a modulation coefficient using the following rectification function $\phi_{\lambda}: \mathbb{R} \rightarrow \mathbb{R}$:

\begin{equation*}
\phi_{\lambda}(x) = 
\begin{cases} 
x & \text{if } x \ge 0, \\
\lambda \cdot x & \text{if } x < 0.
\end{cases}
\end{equation*}

This design treats positive and negative improvement feedback asymmetrically: positive feedback keeps its original magnitude, while negative feedback is scaled by $\lambda$ to control the strength of correction and preserve part of the exploratory information from the previous update. We empirically validate this design choice in Section~\ref{sec:ablation}.
\end{definition}

This construction separates two roles: $a_{t-1,i}$ preserves the base algorithm's local credit assignment, while $\phi_{\lambda}(\xi_t)$ decides whether the previous update should be reinforced or suppressed after observing its empirical effect.

For PPO and SDPO, PIPO directly uses the original attribution scores: $a_{t-1,\tau}=A_{t-1,\tau}^{\mathrm{PPO}}$ from Eq.~\eqref{eq:ppo_advantage_std} for PPO, and $a_{t-1,\tau}=A_{t-1,\tau}^{\mathrm{SDPO}}$ from Eq.~\eqref{eq:sdpo_attribution_std} for SDPO. For group-relative policy optimization, we use GRPO as the representative instantiation and normalize the response-level attribution within each group of size $G$:
\begin{equation*}
a_{t-1,i}^{\mathrm{group}}
=
\frac{G A_{t-1,i}^{\mathrm{group}}}
{\sum_{j=1}^{G}\lvert A_{t-1,j}^{\mathrm{group}}\rvert}.
\end{equation*}
where $A_{t-1,i}^{\mathrm{group}}$ follows the group-relative advantage definition in Eq.~\eqref{eq:grpo_advantage_std}. This normalization controls the PI attribution scale and is supported by our theory and experiments. For GSPO, DAPO, and other GRPO variants, we apply the same normalization to their corresponding group-level attribution scores. Thus, the same closed-loop feedback operates on PPO advantages, group-relative advantages, and dense SDPO token attributions.

\subsection{Optimization Procedure}
\label{subsec:optimization_procedure}

\begin{algorithm}[tb]
  \caption{PIPO Framework}
  \label{alg:pipo}
  \begin{algorithmic}[1]
    \STATE \textbf{Input:} Base objective $\mathcal{J}_{\mathrm{base}}$, attribution rule $a(\cdot)$, window size $K$, learning rates $\alpha_{\mathrm{PI}}, \alpha_{\mathrm{std}}$
    \STATE Initialize memory $\mathcal{M} \leftarrow \emptyset$ and policy $\theta_1$
    \FOR{$t = 1, 2, \dots, T$}
         \STATE Sample $\mathcal{B}_t \sim \pi_{\theta_t}$ and compute $\mu_t$ using Eq.~\eqref{eq:mu_t}
         \STATE Compute local attribution scores $a_t=a(\mathcal{B}_t)$ for the base RL algorithm
         \STATE Set provisional verified policy $\tilde{\theta}_t \leftarrow \theta_t$
         
         \STATE \textit{\# Phase 2: Verification of the previous exploration step}
         \IF{$|\mathcal{M}| \ge K$}
             \STATE Compute $\mu_{\mathrm{his}}$ and $\sigma_{\mathrm{his}}$ from the latest $K$ records in $\mathcal{M}$
             \STATE Compute $\xi_t$ using Eq.~\eqref{eq:xi_def}
             \STATE Retrieve $(\mathcal{B}_{t-1}, a_{t-1})$ from $\mathcal{M}$
             \STATE Construct $\hat{r}^{\mathrm{PI}}_{t,i}$ for $\mathcal{B}_{t-1}$ using Eq.~\eqref{eq:meta_reward_def}
             \STATE $\tilde{\theta}_t \leftarrow \theta_t + \alpha_{\mathrm{PI}} \nabla_\theta \mathcal{J}_{\mathrm{PI}}(\theta_t;\mathcal{B}_{t-1},\hat r^{\mathrm{PI}}_t)$
         \ENDIF
         
         \STATE \textit{\# Phase 1: Exploration with the base RL algorithm}
         \STATE $\theta_{t+1} \leftarrow \tilde{\theta}_t + \alpha_{\mathrm{std}} \nabla_\theta \mathcal{J}_{\mathrm{base}}(\tilde{\theta}_t;\mathcal{B}_t,a_t)$
         
         \STATE Update $\mathcal{M}$ with $(\mu_t, \mathcal{B}_t, a_t)$
    \ENDFOR
    \STATE \textbf{Output:} Optimized policy $\theta_{T+1}$
  \end{algorithmic}
\end{algorithm}

PIPO follows the two-phase process illustrated in Figure~\ref{fig:pipo}. Conceptually, a base RL algorithm first performs \emph{Phase 1: Exploration}, producing a new policy from local rewards, advantages, or distillation targets. The resulting policy is then evaluated in the next iteration, where PIPO performs \emph{Phase 2: Verification} by checking whether this exploration step led to empirical policy improvement. In Algorithm~\ref{alg:pipo}, the current batch $\mathcal{B}_t$ is used to evaluate the previous policy transition and then to compute local attributions for the next exploration step.

\paragraph{Phase 1: Exploration.}
Starting from the verified policy $\tilde{\theta}_t$, PIPO applies the selected base RL algorithm on the current batch to obtain the next policy:
\begin{equation*}
\theta_{t+1}
\leftarrow
\tilde{\theta}_t + \alpha_{\mathrm{std}} \cdot \nabla_\theta \mathcal{J}_{\mathrm{base}}(\tilde{\theta}_t;\mathcal{B}_t),
\end{equation*}
where $\alpha_{\mathrm{std}}$ is the learning rate and $\mathcal{J}_{\mathrm{base}}$ denotes the base policy-learning objective, instantiated by PPO, GRPO, SDPO, or related variants as detailed in Appendix~\ref{app:baselines}. This phase corresponds to the standard exploration step of policy optimization methods.

\paragraph{Phase 2: Verification.} 
The exploration step above is verified at the next iteration. After $\theta_{t+1}$ samples $\mathcal{B}_{t+1}$, the batch mean $\mu_{t+1}$ is compared with the historical anchor to obtain $\xi_{t+1}$. PIPO then constructs PI rewards for the previous batch $\mathcal{B}_t$ and performs a retrospective verification update:

\begin{equation}
\label{eq:meta_update_step}
\begin{gathered}
\tilde{\theta}_{t+1}
\leftarrow
\theta_{t+1} + \alpha_{\mathrm{PI}} \cdot \nabla_\theta \mathcal{J}_{\mathrm{PI}}(\theta_{t+1};\mathcal{B}_{t}),
\\
\mathcal{J}_{\mathrm{PI}}(\theta;\mathcal{B}_{t})
=
\frac{1}{N} \sum_{i=1}^N \min\!\Big( \rho_i^{\mathrm{PI}}(\theta) \hat{r}^{\mathrm{PI}}_{t+1,i},\; \mathrm{clip}(\rho_i^{\mathrm{PI}}(\theta), 1-\epsilon, 1 +\epsilon)\hat{r}^{\mathrm{PI}}_{t+1,i} \Big),
\end{gathered}
\end{equation}
where $\alpha_{\mathrm{PI}}$ is the learning rate, and $\rho_i^{\mathrm{PI}}$ is the retrospective importance ratio against the previous policy $\pi_{\theta_t}$ that generated batch $\mathcal{B}_t$. Its token-level form is

\begin{equation*}
\rho_{t,i,\tau}^{\mathrm{PI}}(\theta)
=
\frac{\pi_{\theta}(y_{t,i,\tau}\mid q_{t,i},y_{t,i,<\tau})}
{\pi_{\theta_t}(y_{t,i,\tau}\mid q_{t,i},y_{t,i,<\tau})},
\end{equation*}
which is evaluated at $\theta=\theta_{t+1}$ in Eq.~\eqref{eq:meta_update_step}. Sequence-level objectives use the analogous sequence ratio.

This retrospective update modulates the previous local attributions according to observed policy improvement: positive feedback reinforces the previous update, while negative feedback scales the corresponding PI signal through $\phi_\lambda(\cdot)$. The resulting verified policy $\tilde{\theta}_{t+1}$ is then used as the starting point for the next base RL update.

\section{Experiments}
\label{sec:experiments}

We design our experiments to answer the following key questions: 
\emph{(i)} Does closed-loop policy improvement yield consistent gains across policy optimization families, model scales, and reasoning benchmarks? 
\emph{(ii)} How does the retrospective policy-improvement mechanism affect training dynamics, particularly regarding optimization stability against data sampling stochasticity and reasoning behavior? 
\emph{(iii)} Does PIPO remain effective when trained and evaluated on code and tool-use RL tasks? 
\emph{(iv)} How sensitive is PIPO to its core design choices, such as the rectification strategy and the historical window size? 
To address these questions, we evaluate PIPO as a plug-and-play module on top of PPO, group-relative, and self-distillation baselines across different model scales, datasets, and ablation settings.

\paragraph{Setup Details.}
We evaluate PIPO across two model scales (Qwen3-4B-Base and Qwen3-8B-Base\cite{yang2025qwen3}) on diverse reasoning benchmarks. For mathematical reasoning, we train on the MATH dataset\cite{NEURIPS_DATASETS_BENCHMARKS2021_be83ab3e} and evaluate on MATH500\cite{NEURIPS_DATASETS_BENCHMARKS2021_be83ab3e}, AIME 2025, AMC 2023, Minerva, and OlympiadBench\cite{he2024olympiadbench}. All reported methods use the same answer extraction and verification pipeline, and our comparisons focus on controlled relative improvements under this evaluator. We further train and evaluate on task-specific code and tool-use datasets, and use SciKnowEval\cite{feng2024sciknoweval} for a focused self-distillation study. More implementation details are provided in Appendix~\ref{app:experimental_details}.

\subsection{Main Results}
\label{subsec:main_results}

\paragraph{Overall Performance.}
As detailed in Table~\ref{tab:main_math_results}, PIPO improves the average performance across policy optimization families and model scales. The gains appear for PPO, group-relative methods, and self-distillation settings, suggesting that policy-improvement feedback is complementary to different forms of local attribution. Rather than replacing the base learning signal, PIPO uses retrospective feedback to modulate it according to observed inter-iteration progress, helping the training process remain aligned with realized task performance.

\paragraph{Training Dynamics.} 

Figure~\ref{fig:main_results} demonstrates the temporal training dynamics of the Qwen3-4B-Base model, illustrating the evolution of average accuracy, critic reward, and response length. Notably, the reported Pass@1 accuracy represents the average performance across all five evaluated mathematical reasoning benchmarks. Throughout the training process, algorithms augmented with PIPO converge to a higher and more stable performance plateau in both average accuracy and critic reward compared to their original baselines. Furthermore, PIPO promotes sustained growth in response length, incentivizing the model to discover deeper, more sophisticated Chain-of-Thought (CoT) reasoning trajectories to tackle challenging tasks.

\begin{table*}[t]
    \centering
    \caption{Main Results on Mathematical Reasoning. We report Pass@1 accuracy (\%) for Qwen3-4B-Base and Qwen3-8B-Base. Improvements over the respective open-loop baselines are highlighted in \textbf{bold}.}
    \label{tab:main_math_results}
    \vspace{-0.2cm}
    \resizebox{\textwidth}{!}{
    \setlength{\tabcolsep}{5pt}
    \begin{tabular}{l | ccccc c || ccccc c}
        \toprule
        & \multicolumn{6}{c||}{\large \textit{\textbf{Qwen3-4B-Base}}} & \multicolumn{6}{c}{\large \textit{\textbf{Qwen3-8B-Base}}} \\
        \textbf{Method} & MATH500 & AIME25 & AMC23 & Minerva & Olympiad & \textbf{AVG} & MATH500 & AIME25 & AMC23 & Minerva & Olympiad & \textbf{AVG} \\
        \midrule
        Base Model & 58.2 & 7.4 & 45.0 & 14.0 & 28.6 & 30.6 & 65.0 & 11.1 & 45.0 & 17.3 & 31.1 & 33.9 \\
        \midrule
        PPO & 80.3 & 11.1 & 62.5 & 25.0 & 41.2 & 44.0 & 81.3 & 18.5 & 67.5 & 25.4 & 43.0 & 47.1 \\
        \textbf{+PIPO} & 80.1 & \textbf{18.5} & \textbf{65.0} & \textbf{25.7} & 39.7 & \textbf{45.8} & \textbf{83.7} & 18.5 & \textbf{70.0} & \textbf{28.3} & \textbf{43.3} & \textbf{48.8} \\
        \midrule
        GRPO & 79.3 & 18.5 & 60.0 & 21.0 & 40.5 & 43.9 & 80.1 & 22.2 & 67.5 & 27.6 & 42.4 & 48.0 \\
        \textbf{+PIPO} & \textbf{80.9} & 18.5 & 60.0 & \textbf{26.8} & \textbf{44.4} & \textbf{46.1} & \textbf{82.3} & 22.2 & 67.5 & \textbf{29.0} & \textbf{43.6} & \textbf{48.9} \\
        \midrule
        GSPO & 78.5 & 18.5 & 62.5 & 23.2 & 39.8 & 44.5 & 81.7 & 22.2 & 67.5 & 26.8 & 45.5 & 48.7 \\
        \textbf{+PIPO} & \textbf{80.3} & \textbf{22.2} & 60.0 & \textbf{24.6} & \textbf{41.8} & \textbf{45.8} & \textbf{83.5} & \textbf{25.9} & \textbf{72.5} & \textbf{28.3} & \textbf{47.5} & \textbf{51.5} \\
        \midrule
        DAPO & 81.3 & 22.2 & 65.0 & 21.7 & 41.8 & 46.4 & 85.3 & 25.9 & 75.0 & 27.9 & 48.7 & 52.6 \\
        \textbf{+PIPO} & \textbf{82.7} & 22.2 & \textbf{70.0} & \textbf{25.0} & \textbf{46.3} & \textbf{49.2} & \textbf{86.3} & \textbf{29.6} & 75.0 & \textbf{30.9} & \textbf{52.2} & \textbf{54.8} \\
        \bottomrule
    \end{tabular}
    }
\end{table*}

\begin{figure*}[t]
  \centering
  \includegraphics[width=\textwidth]{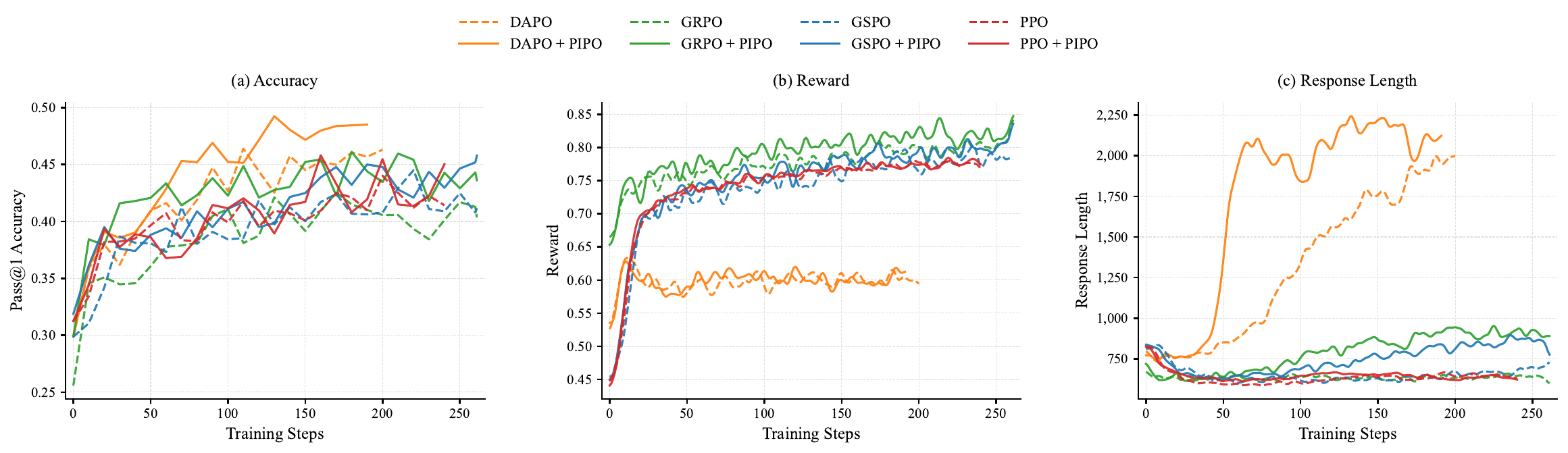}
  \caption{Comparison of training dynamics on Qwen3-4B-Base. (a) Average Pass@1 accuracy evolution across the five evaluated mathematical reasoning benchmarks. (b) Training reward evolution. (c) Response length evolution. Methods augmented with PIPO (solid lines) generally reach higher and more stable trajectories than their respective open-loop baselines (dashed lines).}
  \label{fig:main_results}
\end{figure*}

\subsection{Training on Code and Tool-Use RL Tasks}
\label{subsec:code_tool_results}

\begin{table*}[h]
    \centering
    \caption{Code and Tool-Use RL Results for Qwen3-4B-Base.}
    \label{tab:code_tool_results}
    \setlength{\tabcolsep}{5pt}
    \begin{small}
    \begin{tabular}{lcccc cccc c}
        \toprule
        & \multicolumn{5}{c}{\textbf{Code}} & \multicolumn{4}{c}{\textbf{Tool-Use}} \\
        Method & TACO & LCBv6 & HumanEval & MBPP & Code AVG & RLLA & BFCL & APIBank & Tool AVG \\
        \midrule
        PPO & 23.4 & 31.2 & 85.4 & 41.4 & 45.4 & 87.1 & 66.6 & 90.4 & 81.4 \\
        \textbf{+PIPO} & 22.1 & \textbf{34.9} & \textbf{86.0} & \textbf{45.0} & \textbf{47.0} & \textbf{89.0} & \textbf{71.2} & 90.4 & \textbf{83.5} \\
        \midrule
        GRPO & 23.8 & 31.5 & 83.5 & 44.2 & 45.8 & 85.0 & 58.2 & 90.6 & 77.9 \\
        \textbf{+PIPO} & \textbf{25.4} & \textbf{32.5} & \textbf{86.0} & 43.6 & \textbf{46.9} & \textbf{85.6} & \textbf{61.7} & \textbf{91.7} & \textbf{79.7} \\
        \midrule
        GSPO & 23.3 & 32.6 & 81.7 & 45.0 & 45.7 & 85.6 & 59.7 & 89.6 & 78.3 \\
        \textbf{+PIPO} & \textbf{23.4} & 32.1 & \textbf{84.2} & \textbf{45.8} & \textbf{46.4} & 84.0 & \textbf{62.8} & \textbf{90.6} & \textbf{79.1} \\
        \midrule
        DAPO & 25.4 & 33.5 & 86.0 & 43.8 & 47.2 & 87.4 & 62.2 & 90.2 & 79.9 \\
        \textbf{+PIPO} & \textbf{27.3} & \textbf{36.4} & \textbf{87.2} & \textbf{44.4} & \textbf{48.8} & \textbf{87.7} & \textbf{67.7} & \textbf{90.4} & \textbf{81.9} \\
        \bottomrule
    \end{tabular}
    \end{small}
\end{table*}

To evaluate PIPO beyond mathematical reasoning, we train and evaluate on task-specific code and tool-use RL tasks, where correctness can be verified by executable tests or task-level tool feedback. Table~\ref{tab:code_tool_results} reports the comparison across PPO, GRPO, GSPO, and DAPO, each paired with its PIPO-augmented counterpart.

Across both task families, PIPO improves the average score for all four base algorithms. The gains are especially clear on aggregate code performance and on BFCL-style tool-use evaluation, where noisy execution feedback and heterogeneous task difficulty can make local update signals unreliable. These results suggest that retrospective verification is useful beyond mathematical reasoning whenever task-level feedback is verifiable.

\subsection{PIPO with Self-Distillation}
\label{subsec:self_distillation_results}

\begin{table}[h]
    \centering
    \caption{Self-Distillation Study on SciKnowEval.}
    \label{tab:sdpo_sciknow}
    \setlength{\tabcolsep}{7pt}
    \begin{tabular}{lccccc}
        \toprule
        Method & Biology & Chemistry & Material & Physics & AVG \\
        \midrule
        Qwen3-8B & 4.0 & 13.3 & 39.4 & 30.0 & 21.7 \\
        \midrule
        GRPO & 40.0 & 72.9 & 71.3 & 47.5 & 57.9 \\
        \textbf{+PIPO} & \textbf{54.0} & 60.5 & \textbf{73.4} & \textbf{53.8} & \textbf{60.4} \\
        SDPO & 52.0 & 76.7 & 73.4 & 77.5 & 69.9 \\
        \textbf{+PIPO} & \textbf{60.0} & \textbf{79.1} & \textbf{79.8} & 76.3 & \textbf{73.8} \\
        \bottomrule
    \end{tabular}
\end{table}

We also conduct a focused study on whether policy improvement feedback complements self-distillation. Following the SciKnowEval configuration used in SDPO~\cite{hubotter2026reinforcement}, we compare GRPO and SDPO with their PIPO-augmented counterparts on Qwen3-8B. As shown in Table~\ref{tab:sdpo_sciknow}, PIPO improves both GRPO and SDPO on average. This suggests that PIPO remains useful even when the base algorithm already receives dense token-level feedback from a privileged self-teacher, since it verifies whether the resulting update translates into measurable downstream progress.

\subsection{Ablation Studies}
\label{sec:ablation}

\begin{table}[htbp]
\centering
\caption{Ablation on Rectification Coefficient $\lambda$ (Qwen3-4B-Base).}
\label{tab:ablation_lambda}
\begin{tabular}{lcccccc}
\toprule
$\lambda$ & MATH500 & AIME25 & AMC23 & Minerva & Olympiad & AVG \\
\midrule
GRPO Baseline & 79.3 & 18.5 & 60.0 & 21.0 & 40.5 & 43.9 \\
0 & 80.5 & 18.5 & 62.5 & 22.1 & 39.7 & 44.7 \\
0.05 & 78.5 & \textbf{22.2} & 62.5 & 26.1 & 39.1 & 45.7 \\
0.1 & 80.9 & 18.5 & 60.0 & \textbf{26.8} & \textbf{44.4} & \textbf{46.1} \\
0.2 & \textbf{81.9} & 18.5 & 60.0 & 25.0 & 42.6 & 45.6 \\
0.5 & 81.3 & 18.5 & \textbf{65.0} & 22.8 & 40.9 & 45.7 \\
1 & 78.3 & \textbf{22.2} & 57.5 & 23.5 & 41.2 & 44.5 \\
\bottomrule
\end{tabular}
\end{table}

\textbf{Rectification Coefficient $\lambda$.} A key component of our closed-loop mechanism is the rectification function $\phi_{\lambda}(\cdot)$, where $\lambda$ controls the suppression strength applied to negative improvement signals $(\xi_{t} < 0)$. Table \ref{tab:ablation_lambda} compares different suppression intensities. While all evaluated $\lambda$ configurations successfully outperform the baseline, we observe that hard suppression ($\lambda=1$) is overly conservative. Applying a strong penalty effectively negates exploratory updates, pulling the policy back toward its previous state and heavily restricting the model's ability to discover better reasoning trajectories. Conversely, a reinforce-only setting that completely removes the suppression ($\lambda=0$) still improves over the baseline by tolerating normal exploration without any penalty. Ultimately, a properly calibrated soft suppression ($\lambda=0.1$) achieves the optimal balance (46.1\% average accuracy). This demonstrates that gently dampening harmful updates acts as a stabilizing filter, reducing the impact of genuine performance drops without overly limiting the necessary exploration space.

\textbf{Window Size $K$.} We also conduct an ablation study on the sliding window size $K$, which acts on the smoothed historical baseline. Overall, we find that a moderate window size ($K=8$) serves as a robust equilibrium, balancing the accuracy of the historical capability estimation with the responsiveness needed to track rapid capability gains. Further analyses on stability across multiple random seeds and wall-clock efficiency are detailed in Appendix~\ref{sec:additional_experiments}.




\section{Theoretical Analysis}
\label{sec:theoretical_analysis}

We provide theoretical evidence and an illustrative group-relative case study for PIPO. Building on the objective-alignment result in Theorem~\ref{prop:pirl_alignment}, we first show that PIPO yields local ascent toward the PIRL improvement objective under sign-consistent feedback. We then analyze group-relative optimization as a stylized case where local batch statistics can exhibit boundary sensitivity.

\subsection{Approximate PIRL Ascent}
\label{subsec:general_modulation}

PIPO only requires the base RL algorithm to provide a local attribution score that indicates the direction and relative magnitude of a policy update. For PPO, the attribution is the original GAE advantage; for group-relative methods, it is the normalized group-relative advantage; for SDPO, it is the self-teacher log-ratio attribution. In all cases, the local signal determines \emph{where} to assign credit, while $\phi_\lambda(\xi_t)$ determines whether empirical improvement supports the previous update direction.

Let $g_{t-1}^{\mathrm{base}}$ denote the first-order update direction induced by the base RL algorithm on $\mathcal{B}_{t-1}$. Since the PI reward factorizes into local attribution and scalar improvement feedback, the retrospective PIPO step can be locally written as
\begin{equation*}
\Delta \theta_{\mathrm{PI}}
=
c_t \cdot g_{t-1}^{\mathrm{base}}
+ \mathcal{O}(\|\theta_t-\theta_{t-1}\|),
\qquad
c_t \propto \phi_\lambda(\xi_t).
\end{equation*}

\begin{proposition}[Approximate PIRL Ascent under Sign-Consistent Feedback]
\label{prop:approx_pirl_ascent}
Let $\mathcal{H}_{t-1}$ denote the training history up to iteration $t-1$. For each historical learning unit $u_{t-1,i}$ with local attribution $a_{t-1,i}$, the PI reward satisfies:
\begin{enumerate}[label=(\alph*)]
\item Conditioned on $\mathcal{H}_{t-1}$ and $\mathcal{B}_{t-1}$,
\[
\mathbb{E}\!\left[\hat r^{\mathrm{PI}}_{t,i}\mid \mathcal{H}_{t-1},\mathcal{B}_{t-1}\right]
=
a_{t-1,i}\,
\mathbb{E}\!\left[\phi_\lambda(\xi_t)\mid \mathcal{H}_{t-1},\mathcal{B}_{t-1}\right].
\]
\item Under the sign-consistency condition formalized in Appendix~\ref{sec:proof_approx_pirl_ascent}, a local bounded-gradient condition, and the above local trust-region approximation, the retrospective PIPO update is locally aligned with the first-order improvement direction of the PIRL objective:
\[
\left\langle \nabla J_{\mathrm{RL}}(\theta_t), \Delta\theta_{\mathrm{PI}}\right\rangle
\ge
-\mathcal{O}(\|\theta_t-\theta_{t-1}\|).
\]
\end{enumerate}
\end{proposition}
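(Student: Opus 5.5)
Part (a) rests on measurability. Once we condition on $\mathcal{H}_{t-1}$ and $\mathcal{B}_{t-1}$, the attribution $a_{t-1,i}$ is fixed: it is computed from $\mathcal{B}_{t-1}$ and the parameters current at iteration $t-1$, both contained in the conditioning $\sigma$-algebra. The only new randomness in $\hat r^{\mathrm{PI}}_{t,i}=a_{t-1,i}\,\phi_\lambda(\xi_t)$ enters through $\xi_t$, which depends on the fresh batch $\mathcal{B}_t\sim\pi_{\theta_t}$ and on sampling noise. Since $\theta_t$ and the history statistics $\mu_{\mathrm{his}},\sigma_{\mathrm{his}}$ are themselves functions of $\mathcal{H}_{t-1}$ and $\mathcal{B}_{t-1}$, we can pull the known factor out of the conditional expectation, which gives the displayed identity. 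One technical point: $\phi_\lambda(\xi_t)$ must be integrable. To ensure this, we assume $\sigma_{\mathrm{his}}>0$, which the algorithm guarantees once $|\mathcal{M}|\ge K$ and the history is nondegenerate, and we use the fact that rewards are bounded, so $\mu_t$ is bounded.

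For part (b), the plan is to reduce to a sign statement about the scalar $c_t$. First, expand $\nabla_\theta\mathcal{J}_{\mathrm{PI}}(\theta_t;\mathcal{B}_{t-1},\hat r^{\mathrm{PI}}_t)$ at $\theta_t$. The PI objective has the same clipped-ratio form as the base surrogate, with rewards $a_{t-1,i}\phi_\lambda(\xi_t)$, and the ratio $\rho^{\mathrm{PI}}$ is taken against $\pi_{\theta_{t-1}}$. Consequently $\rho^{\mathrm{PI}}(\theta_t)=1+\mathcal{O}(\|\theta_t-\theta_{t-1}\|)$, so we stay inside the clip region up to that error. Because the scalar $\phi_\lambda(\xi_t)$ factors out of the sum, the gradient equals $\phi_\lambda(\xi_t)$ times the base gradient evaluated at $\theta_{t-1}$, plus a remainder. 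Bounding that remainder by a bounded-gradient / Lipschitz-score assumption yields the stated local form $\Delta\theta_{\mathrm{PI}}=c_t g^{\mathrm{base}}_{t-1}+\mathcal{O}(\|\theta_t-\theta_{t-1}\|)$ with $c_t=\alpha_{\mathrm{PI}}\phi_\lambda(\xi_t)$.

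Next, take the inner product with $\nabla J_{\mathrm{RL}}(\theta_t)$. Using $\|\nabla J_{\mathrm{RL}}\|\le G$ from the bounded-gradient condition, the remainder contributes at least $-G\cdot\mathcal{O}(\|\theta_t-\theta_{t-1}\|)$. What remains is to show $c_t\langle\nabla J_{\mathrm{RL}}(\theta_t),g^{\mathrm{base}}_{t-1}\rangle\ge -\mathcal{O}(\|\theta_t-\theta_{t-1}\|)$. This is exactly what the sign-consistency condition provides, and I would formalize it as
\[
\operatorname{sign}\big(\phi_\lambda(\xi_t)\big)=\operatorname{sign}\big(\langle\nabla J_{\mathrm{RL}}(\theta_{t-1}),g^{\mathrm{base}}_{t-1}\rangle\big)\ \text{ whenever both are nonzero.}
\]
The interpretation is that the observed improvement $\xi_t$ has the sign of the true first-order gain $\Delta J_{t-1}\approx\alpha_{\mathrm{std}}\langle\nabla J_{\mathrm{RL}}(\theta_{t-1}),g^{\mathrm{base}}_{t-1}\rangle$, and $\phi_\lambda$ preserves sign for $\lambda>0$. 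The gradient is evaluated at $\theta_{t-1}$ because that is where the step being verified was taken. Under this condition, $\phi_\lambda(\xi_t)\langle\nabla J_{\mathrm{RL}}(\theta_{t-1}),g^{\mathrm{base}}_{t-1}\rangle\ge0$. Replacing $\theta_{t-1}$ by $\theta_t$ in the gradient costs $L\|\theta_t-\theta_{t-1}\|\,|c_t|\,\|g^{\mathrm{base}}_{t-1}\|$ by smoothness of $J_{\mathrm{RL}}$. This term is $\mathcal{O}(\|\theta_t-\theta_{t-1}\|)$, since $|\phi_\lambda(\xi_t)|$ and $\|g^{\mathrm{base}}\|$ are bounded. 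Combining the two error terms gives the claim.

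I expect the main obstacle to be stating the sign-consistency and bounded-gradient hypotheses precisely enough that the $\mathcal{O}(\cdot)$ constants are uniform. In particular, $\xi_t$ compares against a $K$-window average, not against $\mu_{t-1}$, so its sign is only a proxy for the sign of $\Delta J_{t-1}$. I would sidestep this by taking sign consistency as an assumption, either directly on $\phi_\lambda(\xi_t)$ or in conditional expectation via part (a), rather than trying to derive it from concentration. I would also require $\sigma_{\mathrm{his}}$ to be bounded below so that $\xi_t$ stays bounded.
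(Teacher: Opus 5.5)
Your proposal is correct and follows essentially the same route as the paper: pull the known factor $a_{t-1,i}$ out of the conditional expectation for (a), and for (b) factor $\phi_\lambda(\xi_t)$ out of the PI gradient to get $\Delta\theta_{\mathrm{PI}}=c_t g^{\mathrm{base}}_{t-1}+\mathcal{O}(\|\theta_t-\theta_{t-1}\|)$, then bound the remainder by the bounded-gradient condition and control the leading term by sign consistency. The one difference is where sign consistency sits: the paper's assumption is on the realized gain, $\phi_\lambda(\xi_t)\Delta J_t\ge 0$, and it uses the expansion $\Delta J_t=\alpha_{\mathrm{base}}\langle\nabla J_{\mathrm{RL}}(\theta_{t-1}),g^{\mathrm{base}}_{t-1}\rangle+\mathcal{O}(\|\theta_t-\theta_{t-1}\|^2)$ to recover your inner-product condition up to an $\mathcal{O}(\|\theta_t-\theta_{t-1}\|)$ error, whereas you assume that inner-product condition directly.
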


This gives a local bridge from the ideal PIRL objective to the practical PIPO update: when empirical improvement is informative about update quality, PIPO reinforces directions aligned with realized progress and suppresses directions associated with performance decline.

\subsection{Illustrative Case Study: Group-Relative Boundary Sensitivity}
\label{subsec:geometric_rectification}

Group-relative optimization provides an analyzable example of how an open-loop local surrogate can become sensitive in sparse-reward regimes. The following result is not required for applying PIPO to PPO or self-distillation; rather, it illustrates one concrete boundary regime where closed-loop verification can be useful.

\begin{figure}[t]
    \centering
    \includegraphics[width=\linewidth]{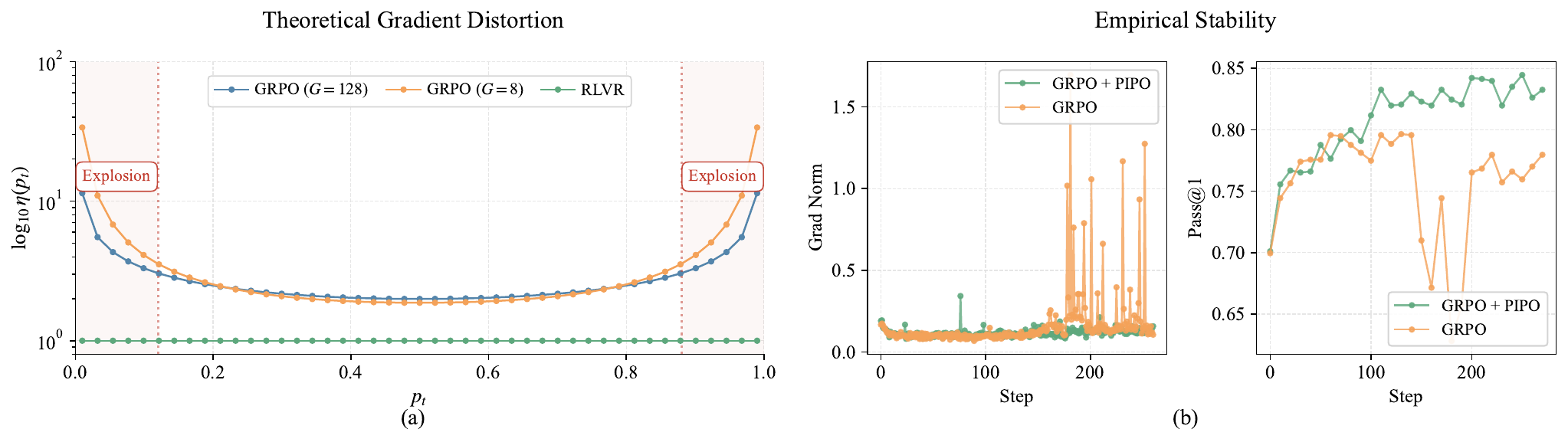}
    \caption{ \textbf{Boundary sensitivity and empirical instability of group-relative optimization.} 
    (a) \textbf{Scaling Sensitivity:} The gradient scaling factor $\eta(p_t)$ evaluated across success rates $p_t$. As established in Corollary~\ref{cor:boundary_sensitivity}, GRPO ($G=8, 128$) can exhibit severe sensitivity near the boundaries ($p_t \to 0, 1$). 
    (b) \textbf{Empirical Stability:} Standard GRPO suffers from gradient norm spikes (left) and Pass@1 degradation (right), while incorporating PIPO stabilizes training.}
    \label{fig:comp}
\end{figure}

For a query $q$, let $S_q := \sum_{i=1}^G R(q,y_i)$ denote the number of successes in a group of size $G$. Degenerate batches ($S_q \in \{0,G\}$) produce identical rewards across the group, resulting in zero group-relative advantages. Consequently, GRPO optimization is conditioned on the non-degenerate event $\mathcal{E} := \{1 \le S_q \le G-1\}$. Under a local trust-region approximation, the reliance on group-relative advantage induces a state-dependent scaling of the ideal RL gradient.

\begin{theorem}[Group-Relative Scaling Sensitivity]
\label{the:grpo_misalignment}
Under the above conditions, the expected GRPO update direction satisfies
\[
g_{\mathrm{GRPO}}(\theta)
=
\eta(p(q;\theta)) \cdot g_{\mathrm{ideal}},
\]
where $g_{\mathrm{ideal}} = \nabla_\theta J_{\mathrm{RL}}(\theta)$ and the scaling factor is
\[
\eta(p)
=
\frac{\sum_{k=1}^{G-1}\sqrt{k(G-k)}\binom{G}{k}p^k (1-p)^{G-k}}{G\,p(1-p)\left(1 - p^G - (1-p)^G\right)}.
\]
Thus, GRPO applies a state-dependent rescaling to the ideal gradient under this simplified analysis.
\end{theorem}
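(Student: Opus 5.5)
We work with binary rewards $R(q,y)\in\{0,1\}$ and a single fixed query $q$; the result for $J_{\mathrm{RL}}$ follows by averaging over $q\sim\mathcal{D}$ under the stated simplification. Under the local trust-region approximation we set $\rho_{i,\tau}\approx 1$ and ignore clipping. The expected GRPO direction is then the score-function estimator
\[
g_{\mathrm{GRPO}} = \mathbb{E}\Big[\tfrac1G\textstyle\sum_{i=1}^G A_i^{\mathrm{GRPO}}\,\nabla_\theta\log\pi_\theta(y_i\mid q)\,\Big|\,\mathcal{E}\Big],
\]
where we also absorb the $1/|y_i|$ length normalization into the sequence-level score. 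The plan has three steps: compute the advantages explicitly given $S_q=k$, take the conditional expectation of the score terms given $S_q=k$, and sum over $k$ with binomial weights, renormalized by $\Pr(\mathcal{E})=1-p^G-(1-p)^G$.

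\textbf{Step 1: the advantages.} Given $S_q=k$ with $1\le k\le G-1$, we have $\mu_q=k/G$ and $\sigma_q=\sqrt{k(G-k)}/G$. Hence a success receives
\[
A^+=\frac{1-k/G}{\sigma_q}=\sqrt{\frac{G-k}{k}},
\]
and a failure receives $A^-=-\sqrt{k/(G-k)}$.

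\textbf{Step 2: conditional expectations of the score.} Let $s(y)=\nabla_\theta\log\pi_\theta(y\mid q)$. The rollouts are exchangeable i.i.d. draws. Conditioned on $S_q=k$, the successful responses are i.i.d. from $\pi$ restricted to the success set, and the failures are i.i.d. from $\pi$ restricted to the failure set. So
\[
\mathbb{E}[s(y)\mid R=1]=\frac{\nabla p}{p},
\]
using $\nabla p=\mathbb{E}[R\,s]$. From $\mathbb{E}[s]=0$ we get
\[
\mathbb{E}[s\mid R=0]=-\frac{\nabla p}{1-p}.
\]
The group sum given $S_q=k$ therefore has expectation
\[
\frac1G\Big[k\sqrt{\tfrac{G-k}{k}}\,\tfrac{\nabla p}{p}+(G-k)\sqrt{\tfrac{k}{G-k}}\,\tfrac{\nabla p}{1-p}\Big]
=\frac{\sqrt{k(G-k)}}{G}\cdot\frac{\nabla p}{p(1-p)}.
\]

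\textbf{Step 3: averaging over $k$.} Averaging over $k$ with weights $\binom{G}{k}p^k(1-p)^{G-k}/\Pr(\mathcal{E})$ gives exactly $\eta(p)\nabla_\theta p$, with $\eta$ as stated. Identifying $\nabla_\theta p$ with $g_{\mathrm{ideal}}$ for the fixed query completes the argument.

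\textbf{Main obstacle.} The delicate part is the conditioning argument in Step 2. The law of each rollout given the group count $S_q$ differs from its law given only its own reward. I would resolve this by exchangeability: condition on which index set $I$ with $|I|=k$ are the successes. Given $I$, the responses are independent, each distributed as $\pi(\cdot\mid R=1)$ or $\pi(\cdot\mid R=0)$ according to membership in $I$.

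A secondary caveat concerns the passage from a per-query factor $\eta(p(q;\theta))$ to the global gradient. When $p$ varies across queries, the factor cannot be pulled outside $\mathbb{E}_q$. I would therefore state the identity per query, or equivalently under a homogeneous-difficulty simplification, consistent with the phrase ``under this simplified analysis.''
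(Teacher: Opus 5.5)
Your proposal is correct and follows the paper's proof essentially step for step: the paper splits $\nabla_\theta\log\pi_\theta(y)=\nabla_\theta\log\mathbb{P}(R)+\nabla_\theta\log\pi_\theta(y\mid R)$ and drops the zero-mean conditional score, which is exactly your identity $\mathbb{E}[s\mid R=1]=\nabla p/p$, $\mathbb{E}[s\mid R=0]=-\nabla p/(1-p)$; your conditioning on the success index set and your per-query caveat are a bit more careful than the paper, which simply identifies $\nabla_\theta p$ with $\nabla_\theta J_{\mathrm{RL}}$. One wording fix: the $1/|y_i|$ factor should be dropped (as the paper implicitly does), not absorbed into $s$, because a length-weighted score would violate $\mathbb{E}[s]=0$ and $\mathbb{E}[Rs]=\nabla p$, and Step 2 relies on both identities.
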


\begin{corollary}[Boundary Scaling Sensitivity]
\label{cor:boundary_sensitivity}
At the probability boundaries ($p \to 0$ or $p \to 1$), the gradient scaling factor exhibits a symmetric singularity:
\[
\eta(p) \sim \frac{\sqrt{G-1}}{G \cdot p(1 - p)}.
\]
\end{corollary}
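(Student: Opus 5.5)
The plan is to read the asymptotics directly off the closed form for $\eta(p)$ in Theorem~\ref{the:grpo_misalignment}. First I will establish a reflection symmetry, so that only the boundary $p\to 0$ needs to be worked out. The claimed equivalence $\eta(p)\sim \sqrt{G-1}/(G\,p(1-p))$ will be checked separately at each endpoint, interpreting $\sim$ as the ratio tending to $1$ as $p\to 0^+$ and as $p\to 1^-$, with $G\ge 2$ fixed.

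\textbf{Step 1 (symmetry).} Substitute $p\mapsto 1-p$ and reindex the sum by $k\mapsto G-k$.
\begin{itemize}
\item The numerator $\sum_{k=1}^{G-1}\sqrt{k(G-k)}\binom{G}{k}p^k(1-p)^{G-k}$ is invariant, because $\sqrt{k(G-k)}$ and $\binom{G}{k}$ are both symmetric under $k\leftrightarrow G-k$.
\item The denominator $G\,p(1-p)\bigl(1-p^G-(1-p)^G\bigr)$ is manifestly invariant.
\end{itemize}
Hence $\eta(p)=\eta(1-p)$. The comparison function $\sqrt{G-1}/(G\,p(1-p))$ is symmetric as well, so the case $p\to1$ follows from the case $p\to0$.

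\textbf{Step 2 (expansion near $p=0$).} For the numerator, every term with $k\ge 2$ is $\mathcal{O}(p^2)$. The $k=1$ term is
\[
\sqrt{G-1}\,G\,p\,(1-p)^{G-1} = \sqrt{G-1}\,G\,p\,\bigl(1+\mathcal{O}(p)\bigr).
\]
So the numerator equals $\sqrt{G-1}\,G\,p\,(1+\mathcal{O}(p))$. For the denominator:
\begin{itemize}
\item $1-(1-p)^G = Gp+\mathcal{O}(p^2)$;
\item $p^G=\mathcal{O}(p^2)$ for $G\ge2$;
\end{itemize}
so $1-p^G-(1-p)^G = Gp\,(1+\mathcal{O}(p))$. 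The denominator is therefore $G^2p^2(1-p)(1+\mathcal{O}(p))$. Dividing gives
\[
\eta(p)=\frac{\sqrt{G-1}}{G\,p\,(1-p)}\bigl(1+\mathcal{O}(p)\bigr),
\]
which is the claimed relation as $p\to0$. Keeping the factor $(1-p)$ exact rather than expanding it is what lets the symmetric form match at both ends.

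\textbf{Step 3 (combine).} Apply Step 1 to transfer the $p\to0$ statement to $p\to1$. This yields the symmetric singularity of order $1/(p(1-p))$.

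\textbf{Main obstacle.} The only delicate point is the edge case $G=2$. There $p^G=p^2$ has the same order as the correction terms, so I will verify directly that it is still absorbed in $\mathcal{O}(p^2)$. Indeed, in that case $1-p^2-(1-p)^2=2p(1-p)$ exactly. The argument also takes the closed form of Theorem~\ref{the:grpo_misalignment} as given; no further properties of GRPO are used.
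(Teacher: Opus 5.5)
Your proposal is correct and follows essentially the same route as the paper: isolate the $k=1$ term of the numerator, expand $1-p^G-(1-p)^G = Gp+\mathcal{O}(p^2)$ near $p=0$, keep the factor $G\,p(1-p)$ intact, and transfer the result to $p\to1$ via the $k\leftrightarrow G-k$ symmetry. Your explicit check that $\eta(p)=\eta(1-p)$ makes precise the paper's one-line appeal to symmetry. The $G=2$ case you flag is not actually delicate: $p^2=\mathcal{O}(p^2)$ trivially, and in fact $\eta(p)=1/(2p(1-p))$ there, so the claimed relation holds with equality.
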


Theorem~\ref{the:grpo_misalignment} and Corollary~\ref{cor:boundary_sensitivity} show that, in a stylized sparse-reward setting, group-relative normalization can amplify rare non-degenerate signals for extremely hard or easy queries. This identifies a boundary-sensitive component already present in the core group-relative estimator, even before considering more complex practical mechanisms such as clipping, filtering, or gradient norm control. PIPO addresses this issue at a different level: rather than modifying the group estimator itself, it checks whether the resulting update improves the subsequent policy and modulates the update accordingly.

\begin{proposition}[Boundary Sensitivity Damping]
\label{prop:rectification}
Under bounded local updates, if the retrospective modulation uses the realized policy improvement $\Delta J_t$ as ideal feedback, PIPO can damp the leading boundary scaling factor in the group-relative case up to first-order approximation. In particular, the retrospective PI gradient remains a bounded modulation of the ideal success-probability gradient near the boundaries:
\begin{equation*}
    \mathbb{E}[\nabla_\theta \mathcal{J}_{\mathrm{PI}}]
    =
    \kappa(p)\nabla_\theta p + \mathcal{O}(\alpha_{\mathrm{std}}),
    \qquad
    \kappa(p)=\mathcal{O}(1),
    \quad \text{as } p \to 0 \text{ or } 1.
\end{equation*}
\end{proposition}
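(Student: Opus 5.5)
We prove Proposition~\ref{prop:rectification} by writing the expected retrospective PI gradient as the product of two factors. The first is the group-relative gradient that produced the previous step, which by Theorem~\ref{the:grpo_misalignment} carries the factor $\eta(p)$. The second is the realized improvement $\Delta J_t$, which is itself proportional to that previous step. We then show that the $1/(p(1-p))$ singularity of $\eta$ is cancelled by the normalization of $a^{\mathrm{group}}$ together with the first-order size of $\Delta J_t$. Throughout we work in the stylized single-query Bernoulli setting of Theorem~\ref{the:grpo_misalignment}. There $J_{\mathrm{RL}}=p(\theta)$, and the previous update is $\theta_t-\theta_{t-1}=\alpha_{\mathrm{std}}\,\hat g_{t-1}$, with $\hat g_{t-1}$ the GRPO gradient built from $\mathcal{B}_{t-1}$.

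\textbf{Step 1: the first-order retrospective gradient.} Evaluate $\nabla_\theta\mathcal{J}_{\mathrm{PI}}$ at the trust-region point, where the ratios $\rho^{\mathrm{PI}}$ equal $1$ up to $\mathcal{O}(\alpha_{\mathrm{std}})$ and the clip is inactive. This gives
\[
\nabla_\theta\mathcal{J}_{\mathrm{PI}}=\phi(\Delta J_t)\,\frac{1}{G}\sum_i a^{\mathrm{group}}_{t-1,i}\nabla\log\pi(y_i)+\mathcal{O}(\alpha_{\mathrm{std}}),
\]
with $\Delta J_t$ replacing $\xi_t$ as the idealized feedback. For binary rewards with $S=k$ successes, the advantages are $\pm$ constants. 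Normalizing by $\sum_j|A_j|$ removes $\sigma_q$, and direct computation gives $a_i=(R_i-k/G)\cdot G/(2k(G-k)/G)$. Hence $\sum_i a_i\nabla\log\pi(y_i)$ has conditional expectation given $S=k$ equal to $c_k\nabla p/(p(1-p))$. Here the score identity $\mathbb{E}[(R-p)\nabla\log\pi]=\nabla p$ is used, and $c_k$ is an explicit bounded coefficient, exactly as in the proof of Theorem~\ref{the:grpo_misalignment} but with $\sqrt{k(G-k)}$ replaced by a $k$-independent normalization. So the PI attribution alone has scaling $\tilde\eta(p)=\mathcal{O}(1/(p(1-p)))$ at worst.

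\textbf{Step 2: the size of the realized improvement.} Taylor-expand: $\Delta J_t=\alpha_{\mathrm{std}}\langle\nabla p,\hat g_{t-1}\rangle+\mathcal{O}(\alpha_{\mathrm{std}}^2)$. Under bounded local updates $\|\hat g_{t-1}\|$ is bounded. The key quantity is $\mathbb{E}[\Delta J_t\cdot\sum_i a_i\nabla\log\pi(y_i)]$, which is the correlation between the same-batch gradient estimate and the step direction. We evaluate it by conditioning on $S$ and using that both factors are linear in the same score vectors. The leading term is a second-moment term of order $\mathbb{E}[\mathbf 1_{\mathcal{E}}\cdot(\cdot)]$. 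Crucially, the non-degenerate event has probability $1-p^G-(1-p)^G\sim Gp$ (resp.\ $G(1-p)$) near the boundaries. This probability was divided out in $\eta$ because GRPO conditions on $\mathcal{E}$, but here it multiplies, because the PI update is only nonzero on non-degenerate past batches and we take an unconditional expectation over the feedback. Combining Steps 1 and 2, the $1/(p(1-p))$ from the attribution is offset by the factor $p(1-p)$ coming from $\Pr(\mathcal{E})$ together with the variance $p(1-p)$ of the Bernoulli score product. This gives $\kappa(p)=\mathcal{O}(1)$. Any $\phi_\lambda$ asymmetry contributes at most the factor $\max(1,\lambda)$.

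\textbf{Step 3: remainders and the main obstacle.} Collect the $\mathcal{O}(\alpha_{\mathrm{std}})$ terms from the ratio expansion and from second-order Taylor terms, using the bounded-gradient assumption. The main obstacle is Step 2. There we must show that the expected product of $\Delta J_t$ with the attribution gradient is genuinely proportional to $\nabla p$, rather than to some other direction from the outer product of scores, and that the boundary asymptotics of the resulting binomial sums are $\mathcal{O}(1)$. My first attempt is to reduce to a one-parameter logit model, in which all scores are collinear with $\nabla p$. There the sums are explicit binomial expressions whose $p\to0$ limit is governed by the $k=1$ term, as in Corollary~\ref{cor:boundary_sensitivity}. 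I would then state the general case as holding up to first-order approximation, consistent with the proposition's wording.
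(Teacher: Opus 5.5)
\textbf{Gap in Step 2: the cancellation is attributed to the wrong factor.}

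The singularity the proposition refers to is $\eta(p)$ from Theorem~\ref{the:grpo_misalignment}. That factor is computed conditionally on $\mathcal{E}$, and the paper's proof also works on a fixed non-degenerate outcome $S$. You get your small factor by switching to an unconditional expectation, so that $\Pr(\mathcal{E})\sim Gp$ multiplies. That move tames vanilla GRPO just as well: $\Pr(\mathcal{E})\,\eta(p)=\sum_{k=1}^{G-1}\sqrt{k(G-k)}\binom{G}{k}p^k(1-p)^{G-k}/(Gp(1-p))\to\sqrt{G-1}$. It also makes your Step 1 coefficient $\Pr(\mathcal{E})/(2p(1-p))\to G/2$ before $\Delta J_t$ plays any role. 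So it changes the yardstick rather than showing that the feedback $\Delta J_t$ does the damping, which is what the statement asserts.

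The bookkeeping also does not hold up. The $1/(p(1-p))$ lives in the conditional first moment of Step 1, via $\mathbb{E}[\nabla\log\pi\mid R=1]=\nabla p/p$. Step 2, however, needs the expectation of a product of two quantities built from the same scores, and that does not factor. Your ``Bernoulli score variance $p(1-p)$'' is special to the one-parameter logit model, where the score second moment coincides with $\nabla p=p(1-p)$. For a general policy, $\mathbb{E}[\nabla\log\pi\,\nabla\log\pi^\top\mid R=1]$ need not vanish at the boundary.

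\textbf{The missing idea.} You need the bounded-score estimate that the paper isolates as a lemma. Since $\nabla_\theta p=\sum_{R(y)=1}\pi_\theta(y)\nabla\log\pi_\theta(y)=-\sum_{R(y)=0}\pi_\theta(y)\nabla\log\pi_\theta(y)$ and $\|\nabla\log\pi_\theta\|\le M$, one has $\|\nabla_\theta p\|\le M\min\{p,1-p\}$. Inserted into your own expansion, this gives $|\Delta J_t|\le\alpha_{\mathrm{std}}\|\nabla p\|\,\|\hat g_{t-1}\|=\mathcal{O}(\min\{p,1-p\})$ on every non-degenerate batch. Hence the Step 1 coefficient $\Delta J_t/(2p(1-p))$ is bounded directly. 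The remainder is controlled by $Z_{t-1}=2\sqrt{S(G-S)}\ge 2\sqrt{G-1}$. This is the paper's argument, and it needs neither $\Pr(\mathcal{E})$ nor binomial asymptotics.

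If you instead want to treat the same-batch correlation honestly, write $\nabla_\theta\mathcal{J}_{\mathrm{PI}}\approx\alpha_{\mathrm{std}}\,\tilde g\,\hat g_{t-1}^{\top}\nabla p$, with $\tilde g:=\frac{1}{G}\sum_i a^{\mathrm{group}}_{t-1,i}\nabla\log\pi(y_i)$ and $\|\tilde g\|,\|\hat g_{t-1}\|\le M$. This is a uniformly bounded linear map applied to $\nabla p$. Extracting a scalar $\kappa(p)$ from it still requires either your one-parameter reduction or the paper's replacement of $\hat g_{t-1}$ by its conditional mean given $S$.
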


Proposition~\ref{prop:rectification} formalizes the damping role of idealized policy-improvement feedback in this stylized group-relative setting. In practice, PIPO replaces $\Delta J_t$ with the standardized empirical feedback $\phi_\lambda(\xi_t)$. The key point is not that PIPO guarantees global improvement, but that policy-improvement feedback can reduce the leading-order boundary sensitivity of a local group-relative update.


\section{Related Work}
\label{sec:related_work}

\paragraph{RL Post-Training with Verifiable and Rich Feedback.}
RL post-training has rapidly become a core approach for improving LLM reasoning and agentic capabilities.
One line of work relies on verifiable rewards from answer checkers, program execution, or tool feedback, enabling scalable optimization for mathematical reasoning, code generation, and tool-use tasks~\cite{yang2025qwen3,Guo_2025,zou2025transformer,chen2026scoring,xie2026uniarm,chen2026weakdriven,wu2025ucoder}.
Another line studies reinforcement learning with rich feedback, where execution traces, judge comments, or privileged information provide denser training signals.
On-policy distillation and self-distillation methods further transform such feedback into teacher-guided learning targets~\cite{song2026survey,li2026rethinking,hubotter2026reinforcement,zhao2026self}.
These works mainly differ in how they construct local learning signals.
PIPO is complementary: it does not introduce a new verifier, critic, or teacher, but adds cross-iteration policy-improvement feedback on top of existing RL post-training objectives.

\paragraph{Critic-Free RL and Advantage Estimation.}
Following the success of GRPO\cite{shao2024deepseekmath} and RLOO\cite{ahmadian2024rloo} in eliminating value networks within PPO\cite{schulman2017proximal}, the field has seen a surge of methodological refinements to stabilize group-relative estimation\cite{liu2026cdrrm,lu2026contextual,li2026adaptive,dai2026harder,plyusov2026fgrpo,qi2026rethinkingtrustregionllm,li2026optimaltokenbaselinevariance,qiyuan2026hipo}. 
For example, HA-DW\cite{yang2026your} identifies a fundamental bias in group-relative advantage estimation, showing that it systematically underestimates hard cases and overestimates easy questions.
Dr.GRPO~\cite{liu2025understanding} removes variance normalization, GMPO~\cite{zhao2025gmpo} adopts geometric averaging to handle outliers, while DCPO~\cite{yang2025dcpo} and CISPO~\cite{minimax2025cispo} introduce adaptive clipping and smooth standardization to alleviate vanishing gradients.
Other approaches address estimation bias via reweighting or baseline design, such as OPO~\cite{hao2025policy}.
Finer-grained credit assignment has also been explored by decomposing rewards at the sub-trajectory level~\cite{guo2025spo,sun2025ktae}.
Despite these advances, most methods operate within isolated batches and overlook temporal consistency across policy updates.
PIPO departs from this paradigm by introducing \textit{retrospective verification}, which explicitly incorporates cross-iteration policy improvement signals.

\paragraph{Stability in RL Post-Training.}
Stability has been widely studied in RL post-training, including exploration regularization~\cite{hao2025policy,cheng2025entropy,deng2025decomposing,song2025curiosity}, larger rollout budgets~\cite{hu2025rollouts,he2026random}, curriculum-based training schedules~\cite{chen2025curriculum,setlur2025prm,sundaram2026soar}, and objectives that preserve generative diversity~\cite{tajwar2026maximum,wu2026distribution,yue2025impact,gxchen2025collapse,chen2026passk,chen2026exploration,chen2026eepo}.
These methods primarily adjust the local training objective or sampling process.
PIPO instead introduces an inter-iteration verification signal, using historical performance feedback to modulate policy updates after their empirical effect is observed.

\paragraph{Beyond Intra-Batch Estimation.}
Optimizing policies based on isolated batches often suffers from high-variance estimates due to the lack of temporal or global context during the gradient computation\cite{shi2026experientialreinforcementlearning,jiang2026overthinking,huang2026on,huang2026pros,liang2026beyond,xie2026sample}.
In representation learning, this issue has been addressed using cross-batch memory mechanisms to approximate global statistics~\cite{wang2020xbm,wang2021cbns}.
Analogously, recent RL post-training methods have begun incorporating historical information.
SPO~\cite{xu2025single} employs a moving-average baseline to avoid degenerate groups, while HA-DW~\cite{yang2026your} and OPO~\cite{hao2025policy} leverage historical statistics to correct bias and reduce variance.
However, these approaches primarily use history for baseline adjustment or reweighting.
PIPO extends this line of work by treating historical performance as a dynamic performance anchor, enabling retrospective verification that explicitly suppresses policy updates inconsistent with sustained improvement.

\section{Conclusion}

This work addresses a fundamental limitation of existing RL post-training methods: optimization relies on step-wise reward, advantage, or distillation signals without explicitly verifying whether an update has produced a better policy over time. We introduce PIRL, which reframes post-training as the maximization of inter-iteration performance gains, emphasizing verifiable progress rather than instantaneous surrogate objectives. Building on PIRL, we propose PIPO, a closed-loop algorithm that operationalizes this principle through retrospective verification. PIPO uses empirical improvement signals to gate and rectify local attribution signals from PPO, group-relative, and self-distillation objectives. We show theoretically that PIPO locally aligns policy updates with the PIRL improvement objective and analyze group-relative optimization as a concrete case of open-loop distortion.
Empirically, PIPO yields consistent gains across policy optimization families on mathematical reasoning, code, tool-use, and self-distillation settings.

\section{Impact Statement}

This work introduces Policy Improvement Reinforcement Learning as a conceptual shift in RL post-training. Rather than treating optimization as a sequence of independent, step-wise updates driven by instantaneous surrogate signals, PIRL reframes learning as the accumulation of verifiable inter-iteration improvements. This perspective highlights a missing temporal dimension in existing RL methods and provides a principled lens for analyzing stability, bias, and performance decline phenomena that arise in sparse-feedback reasoning tasks. By making policy improvement itself an explicit optimization target, PIRL offers a unifying objective that remains fully aligned with final-task performance while enabling new forms of feedback control. The proposed PIPO algorithm demonstrates how this principle can be instantiated in practice through retrospective verification, without introducing additional critics or heavy heuristics. More broadly, PIRL suggests a general design paradigm for reinforcement learning systems in which progress, not just reward, becomes the central unit of optimization, with potential implications for scalable, stable post-training of large language models beyond the specific algorithms studied here.

\bibliographystyle{plain}
\bibliography{references}

\newpage
\appendix

\section{Training and Evaluation Details}
\label{app:experimental_details}

\subsection{Models and Datasets}
\paragraph{Models}
For the main RL post-training experiments, we evaluate all methods on Qwen3-4B-Base and Qwen3-8B-Base~\cite{yang2025qwen3}, covering two representative model scales.
We deliberately use base (non-instruction-tuned) models to assess the ability of RL methods to elicit reasoning behaviors from scratch, without relying on supervised or alignment priors.
The self-distillation study is the exception: following the SDPO setting~\cite{hubotter2026reinforcement}, we use Qwen3-8B with stronger instruction-following ability, since constructing a reliable privileged self-teacher from a base model is unstable.

\paragraph{Datasets}
Mathematical reasoning experiments are trained on the MATH dataset~\cite{NEURIPS_DATASETS_BENCHMARKS2021_be83ab3e}, which contains approximately 7.5k multi-step mathematical reasoning problems.
Evaluation is performed on MATH500, a representative subset of the MATH test set; AIME 2025 and AMC 2023, which measure competition-level mathematical reasoning; Minerva~\cite{lewkowycz2022solving}, which emphasizes scientific and quantitative reasoning; and OlympiadBench~\cite{he2024olympiadbench}, which contains Olympiad-level mathematical and scientific problems.
All methods use the same answer extraction and verification pipeline. Absolute scores may differ from prior reports due to differences in prompts, decoding, and normalization, while comparisons in this paper are made under the same evaluator.
For code RL, models are trained on TACO~\cite{li2023taco} and evaluated on TACO, LiveCodeBench v6 (LCBv6)~\cite{jain2024livecodebench}, HumanEval~\cite{chen2021evaluating}, and MBPP~\cite{austin2021program} using executable tests or task-specific correctness checks. For tool-use RL, models are trained on ToolRL/RLLA~\cite{qian2025toolrl} and evaluated on RLLA~\cite{qian2025toolrl}, BFCL~\cite{patil2025bfcl}, and APIBank~\cite{li2023apibank}, where policies must invoke tools or external functions and receive verifiable task-level feedback. In addition, we use SciKnowEval\cite{feng2024sciknoweval} for the focused self-distillation study in Section~\ref{subsec:self_distillation_results}. This benchmark evaluates large language models across biology, chemistry, physics, and materials science.

\subsection{Baselines}
\label{app:baselines}
We instantiate PIPO on top of PPO~\cite{schulman2017proximal}, GRPO~\cite{shao2024deepseekmath}, GSPO~\cite{zheng2025group}, DAPO~\cite{yu2025dapo}, and SDPO~\cite{hubotter2026reinforcement}. Let $x_t$ denote a prompt at training step $t$, and let $o_{t,i}=(y_{t,i,1},\ldots,y_{t,i,|o_{t,i}|})$ be the $i$-th sampled response with verifier reward $R_{t,i}$. For token-level objectives, we define the importance ratio
\begin{equation*}
\rho_{t,i,\tau}(\theta)
=
\frac{\pi_\theta(y_{t,i,\tau}\mid x_t,o_{t,i,<\tau})}
{\pi_{\theta_{\mathrm{old}}}(y_{t,i,\tau}\mid x_t,o_{t,i,<\tau})}.
\end{equation*}

PPO uses a learned value function and GAE advantage $A^{\mathrm{PPO}}_{t,\tau}$; since PPO uses a single rollout per prompt, we omit the response index and write $r_{t,\tau}$ for the token-level ratio. Its clipped surrogate is
\begin{equation*}
\mathcal{J}_{\mathrm{PPO}}(\theta)
=
\mathbb{E}\!\left[
\frac{1}{|o_t|}\sum_{\tau=1}^{|o_t|}
\min\!\left(
\rho_{t,\tau}(\theta)A^{\mathrm{PPO}}_{t,\tau},
\mathrm{clip}(\rho_{t,\tau}(\theta),1-\epsilon,1+\epsilon)A^{\mathrm{PPO}}_{t,\tau}
\right)
\right].
\end{equation*}
GRPO removes the value network by assigning each response a group-normalized advantage,
\begin{equation*}
A^{\mathrm{GRPO}}_{t,i}
=
\frac{R_{t,i}-\mathrm{mean}(\{R_{t,j}\}_{j=1}^{G})}
{\mathrm{std}(\{R_{t,j}\}_{j=1}^{G})},
\end{equation*}
and broadcasts it to all tokens in the response:
\begin{equation*}
\mathcal{J}_{\mathrm{GRPO}}(\theta)
=
\mathbb{E}\!\left[
\frac{1}{G}\sum_{i=1}^{G}
\frac{1}{|o_{t,i}|}\sum_{\tau=1}^{|o_{t,i}|}
\min\!\left(
\rho_{t,i,\tau}(\theta)A^{\mathrm{GRPO}}_{t,i},
\mathrm{clip}(\rho_{t,i,\tau}(\theta),1-\epsilon,1+\epsilon)A^{\mathrm{GRPO}}_{t,i}
\right)
\right].
\end{equation*}
GSPO keeps the group-relative advantage but replaces the token-wise ratio with a sequence-level ratio
\begin{equation*}
\rho^{\mathrm{seq}}_{t,i}(\theta)
=
\exp\!\left(
\frac{1}{|o_{t,i}|}\sum_{\tau=1}^{|o_{t,i}|}
\log \rho_{t,i,\tau}(\theta)
\right),
\end{equation*}
yielding
\begin{equation*}
\mathcal{J}_{\mathrm{GSPO}}(\theta)
=
\mathbb{E}\!\left[
\frac{1}{G}\sum_{i=1}^{G}
\min\!\left(
\rho^{\mathrm{seq}}_{t,i}(\theta)A_{t,i},
\mathrm{clip}(\rho^{\mathrm{seq}}_{t,i}(\theta),1-\epsilon_{\mathrm{low}},1+\epsilon_{\mathrm{high}})A_{t,i}
\right)
\right].
\end{equation*}
DAPO further uses asymmetric clipping and filters degenerate groups. Denoting the retained prompt groups by $\mathcal{F}_t$, its objective is
\begin{equation*}
\mathcal{J}_{\mathrm{DAPO}}(\theta)
=
\mathbb{E}_{x_t\in\mathcal{F}_t}\!\left[
\frac{1}{\sum_i |o_{t,i}|}
\sum_{i=1}^{G}\sum_{\tau=1}^{|o_{t,i}|}
\min\!\left(
\rho_{t,i,\tau}(\theta)A_{t,i},
\mathrm{clip}(\rho_{t,i,\tau}(\theta),1-\epsilon_{\mathrm{low}},1+\epsilon_{\mathrm{high}})A_{t,i}
\right)
\right].
\end{equation*}
For self-distillation, SDPO uses privileged information $f$ available after the rollout, such as execution feedback, a verified solution, or the ground-truth option, to construct a feedback-conditioned self-teacher distribution $q_{\theta}(\cdot\mid x_t,f,o_{t,<\tau})$. It then minimizes a token-level reverse KL objective:
\begin{equation*}
\mathcal{L}_{\mathrm{SDPO}}(\theta)
=
\mathbb{E}\!\left[
\sum_{\tau}
D_{\mathrm{KL}}
\left(
\pi_{\theta}(\cdot\mid x_t,o_{t,<\tau})
\;\|\;
\mathrm{stopgrad}\!\left(q_{\theta}(\cdot\mid x_t,f,o_{t,<\tau})\right)
\right)
\right],
\end{equation*}
which provides dense token-level attribution through the self-teacher/student log-probability difference
$A^{\mathrm{SDPO}}_{t,i,\tau}=\log q_{\theta}(y_{t,i,\tau}\mid x_t,f,o_{t,i,<\tau})-\log \pi_{\theta}(y_{t,i,\tau}\mid x_t,o_{t,i,<\tau})$.

PIPO preserves each baseline objective and appends a retrospective policy-improvement update. Let $\mathcal{J}_{\mathrm{base}}$ denote the maximization form of the corresponding baseline objective above, and let $a_{t,i,\tau}$ denote its local attribution score: $A^{\mathrm{PPO}}_{t,\tau}$ for PPO, group-relative advantages for GRPO, GSPO, and DAPO, and the SDPO token attribution for self-distillation. Given the improvement signal $\xi_{t+1}$, PIPO forms
\begin{equation*}
\hat r^{\mathrm{PI}}_{t+1,i,\tau}
=
a_{t,i,\tau}\,\phi_{\lambda}(\xi_{t+1}),
\end{equation*}
and performs the additional clipped update on the previous batch:
\begin{align*}
\theta_{t+1}^{\mathrm{base}}
&=
\theta_t+\alpha_{\mathrm{std}}
\nabla_\theta \mathcal{J}_{\mathrm{base}}(\theta_t;\mathcal{B}_t), \\
\theta_{t+1}^{\mathrm{PIPO}}
&=
\theta_{t+1}^{\mathrm{base}}
+\alpha_{\mathrm{PI}}
\nabla_\theta \mathcal{J}_{\mathrm{PI}}(\theta_{t+1}^{\mathrm{base}};\mathcal{B}_t),
\end{align*}
where
\begin{equation*}
\mathcal{J}_{\mathrm{PI}}(\theta;\mathcal{B}_t)
=
\frac{1}{Z_t}\sum_{i,\tau}
\min\!\left(
\rho_{i,\tau}(\theta)\hat r^{\mathrm{PI}}_{t+1,i,\tau},
\mathrm{clip}(\rho_{i,\tau}(\theta),1-\epsilon,1+\epsilon)\hat r^{\mathrm{PI}}_{t+1,i,\tau}
\right),
\end{equation*}
$Z_t$ is the number of historical learning units, and $\rho_{i,\tau}(\theta)$ is the importance ratio between the retrospective policy and the policy that generated $\mathcal{B}_t$. This formulation makes PIPO a direct plug-in: it leaves the baseline objective intact and only modulates whether the previous update should be reinforced or suppressed according to verified policy improvement.

\subsection{Training Hyperparameters}

\begin{table*}[ht]
\centering
\caption{Hyperparameter settings for PPO, group-relative methods (GRPO, GSPO, DAPO), and their PIPO-enhanced versions}
\label{tab:hyperparameters}
\begin{small}
\resizebox{\textwidth}{!}{
\begin{tabular}{l|l|cccccccc}
\toprule
\multicolumn{2}{c|}{\textbf{Hyperparameter}} & \textbf{PPO} & \textbf{PPO+PIPO} & \textbf{GRPO} & \textbf{GRPO+PIPO} & \textbf{GSPO} & \textbf{GSPO+PIPO} & \textbf{DAPO} & \textbf{DAPO+PIPO} \\
\midrule
\multirow{2}{*}{Compute} 
 & Nodes & 1 & 1 & 1 & 1 & 1 & 1 & 1 & 1 \\
 & GPUs per Node & 8 & 8 & 8 & 8 & 8 & 8 & 8 & 8 \\
\midrule
\multirow{4}{*}{General} 
 & Use KL in Reward & True & True & False & False & False & False & False & False \\
 & Use KL Loss & False & False & True & True & True & True & True & True \\
 & KL Coef & 0.001 & 0.001 & 0.001 & 0.001 & 0.001 & 0.001 & 0.001 & 0.001 \\
 & Optimizer & AdamW & AdamW & AdamW & AdamW & AdamW & AdamW & AdamW & AdamW \\
\midrule
\multirow{5}{*}{Training} 
 & Epochs & 24 & 24 & 3 & 3 & 3 & 3 & 9 & 9 \\
 & Batch Size & 1024 & 1024 & 128 & 128 & 128 & 128 & 128 & 128 \\
 & Mini Batch Size & 512 & 512 & 64 & 64 & 64 & 64 & 64 & 64 \\
 & Micro Batch Size & 4 & 4 & 4 & 4 & 4 & 4 & 4 & 4 \\
 & Learning Rate & $1 \times 10^{-6}$ & $1 \times 10^{-6}$ & $1 \times 10^{-6}$ & $1 \times 10^{-6}$ & $1 \times 10^{-6}$ & $1 \times 10^{-6}$ & $1 \times 10^{-6}$ & $1 \times 10^{-6}$ \\
\midrule
\multirow{2}{*}{Clipping} 
 & Clip High ($\epsilon_{\text{high}}$) & 0.2 & 0.2 & 0.2 & 0.2 & 0.0004 & 0.0004 & 0.28 & 0.28 \\
 & Clip Low ($\epsilon_{\text{low}}$) & 0.2 & 0.2 & 0.2 & 0.2 & 0.0003 & 0.0003 & 0.2 & 0.2 \\
\midrule
\multirow{5}{*}{Rollout} 
 & Rollout ($G$) & 1 & 1 & 8 & 8 & 8 & 8 & 8 & 8 \\
 & Temperature & 1.0 & 1.0 & 1.0 & 1.0 & 1.0 & 1.0 & 1.0 & 1.0 \\
 & Max Prompt Len & 512 & 512 & 512 & 512 & 512 & 512 & 512 & 512 \\
 & Max Response Len & 4096 & 4096 & 4096 & 4096 & 4096 & 4096 & 4096 & 4096 \\
 & Filter Groups & No & No & No & No & No & No & Yes & Yes \\
\midrule
\multirow{3}{*}{\textbf{PIPO Specific}} 
 & Window Size ($K$) & - & 8 & - & 8 & - & 8 & - & 8 \\
 & PI LR ($\alpha_{\text{PI}}$) & - & $1 \times 10^{-6}$ & - & $1 \times 10^{-6}$ & - & $1 \times 10^{-6}$ & - & $1 \times 10^{-6}$ \\
 & Rectification Coefficient($\phi_\lambda(\cdot)$) & - & 0.1 & - & 0.1 & - & 0.1 & - & 0.1 \\
\bottomrule
\end{tabular}
}
\end{small}
\end{table*}

We conduct all RL training within the \textbf{VeRL} framework~\cite{sheng2025hybridflow} on a single compute node equipped with \textbf{8 $\times$ NVIDIA H20 (141GB) GPUs}.
All experiments share the same core hyperparameters (e.g., learning rate and KL coefficient) to ensure fair comparisons.
For PPO, we use batch size 1024 and 24 epochs to match the response-level budget of group-relative methods with batch size 128 and $G=8$.
Method-specific settings, such as clipping strategies, follow the original implementations.
PIPO introduces additional parameters for retrospective verification, including the historical window size $K$ and the policy improvement learning rate.
Detailed configurations for all methods are reported in Table~\ref{tab:hyperparameters}.

\section{Additional Experiments}
\label{sec:additional_experiments}

\subsection{Robust Multi-Sample Reasoning}
\label{sec:passk_results}

To further evaluate reasoning robustness, we measure Pass@8 accuracy on Qwen3-4B-Base. Table~\ref{tab:pass8_results} compares PPO and group-relative baselines under the multi-sample evaluation setting.

\begin{table}[htbp]
    \centering
    \caption{Pass@8 Results for Qwen3-4B-Base.}
    \label{tab:pass8_results}
    \begin{tabular}{lcccccc}
        \toprule
        Method & MATH500 & AIME25 & AMC23 & Minerva & Olympiad & AVG \\
        \midrule
        PPO & 87.2 & 20.4 & 79.0 & 32.9 & 54.0 & 54.7 \\
        \textbf{+PIPO} & \textbf{88.0} & \textbf{28.1} & 72.6 & \textbf{33.0} & 53.9 & \textbf{55.1} \\
        \midrule
        GRPO & 88.3 & 27.6 & 80.2 & 34.7 & 54.1 & 57.0 \\
        \textbf{+PIPO} & \textbf{89.4} & \textbf{30.3} & \textbf{84.0} & \textbf{35.9} & \textbf{57.1} & \textbf{59.3} \\
        \midrule
        GSPO & 87.2 & 29.1 & 81.3 & 34.8 & 57.2 & 57.9 \\
        \textbf{+PIPO} & \textbf{88.9} & \textbf{31.2} & \textbf{82.7} & \textbf{35.9} & \textbf{57.6} & \textbf{59.3} \\
        \midrule
        DAPO & 88.2 & 30.7 & 84.9 & 35.9 & 59.4 & 59.8 \\
        \textbf{+PIPO} & \textbf{89.2} & \textbf{31.4} & \textbf{86.2} & 35.8 & \textbf{60.0} & \textbf{60.5} \\
        \bottomrule
    \end{tabular}
\end{table}

PIPO improves the average Pass@8 score for all evaluated base algorithms, indicating that the closed-loop verification signal remains beneficial under multi-sample decoding. The gains are particularly pronounced for group-relative baselines, suggesting that suppressing performance-degrading updates improves not only single-sample accuracy but also the quality of the candidate solution set.

\subsection{Ablation on Window Size K} 

\begin{table}[htbp]
\centering
\caption{Ablation on Window Size $K$ on Qwen3-4B-Base.}
\label{tab:ablation_k}
\begin{tabular}{lcccccc}
\toprule
$K$ & MATH500 & AIME25 & AMC23 & Minerva & Olympiad & AVG \\
\midrule
GRPO Baseline & 79.3 & 18.5 & 60.0 & 21.0 & 40.5 & 43.9 \\
2 & 78.5 & 18.5 & \textbf{67.5} & 22.8 & 41.5 & 45.8 \\
4 & 77.3 & \textbf{22.2} & 62.5 & \textbf{26.8} & 40.9 & 45.9 \\
8 & \textbf{80.9} & 18.5 & 60.0 & \textbf{26.8} & \textbf{44.4} & \textbf{46.1} \\
16 & 80.1 & 14.8 & 65.0 & 23.2 & 41.2 & 44.9 \\
32 & 77.1 & 11.1 & 65.0 & 25.4 & 39.4 & 43.6 \\
\bottomrule
\end{tabular}
\end{table}

The sliding window size $K$ primarily acts on the smoothed historical baseline $\mu_{his}$, which reflects the model's generalized capability prior to the current policy update. As shown in Table~\ref{tab:ablation_k}, PIPO consistently outperforms the baseline across varying window sizes, but exhibits a temporal trade-off. A smaller window (e.g., $K$=2 or 4) provides a less accurate estimation of the historical baseline, as it is more susceptible to the randomness of varying problem difficulty and sampling noise. On the other hand, an excessively large window (e.g., $K$=32) introduces overly stale data. Because the policy improves rapidly during RL post-training, a delayed $\mu_{his}$ consistently underestimates the model's actual capability. We find that $K$=8 serves as a robust equilibrium, effectively providing an accurate estimation of the historical capability while remaining responsive enough to track rapid capability gains.

\subsection{Robustness Across Random Seeds}

\begin{figure}[H]
  \centering
  \includegraphics[width=0.9\columnwidth]{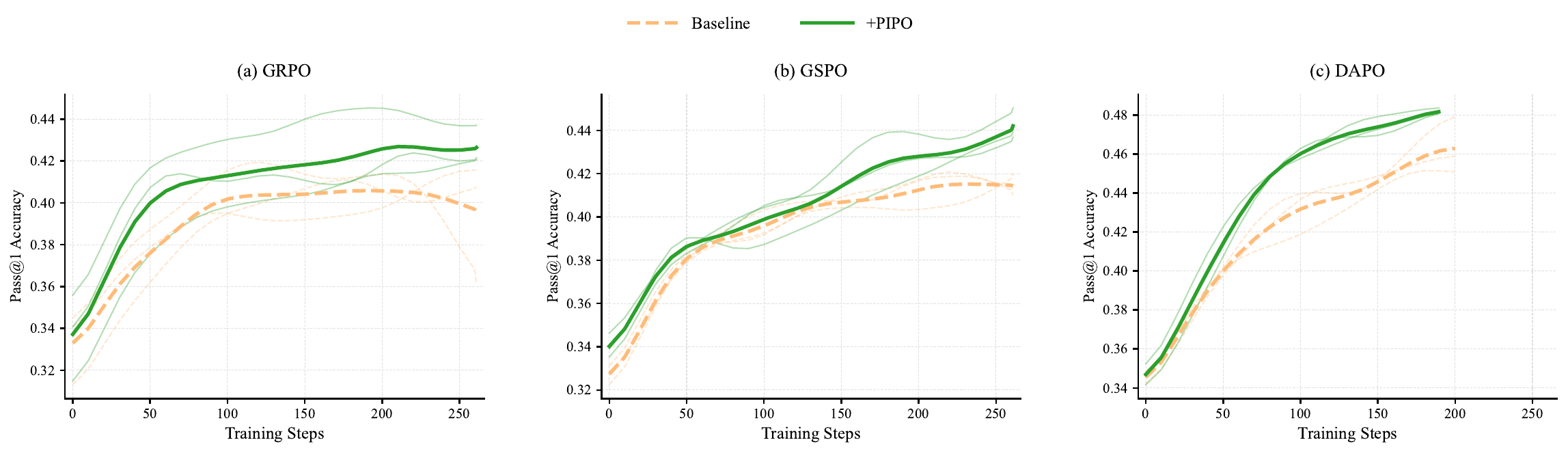}
  \caption{Training dynamics across multiple random seeds on Qwen3-4B-Base.}
  \label{fig:seed_robustness}
\end{figure}

Reinforcement learning algorithms, particularly in sparse-reward reasoning tasks, are sensitive to data sampling orders and initializations. To rigorously evaluate the stability of our proposed closed-loop mechanism, we extend our training dynamics analysis across multiple random data seeds.

We evaluate GRPO, DAPO, and GSPO alongside their PIPO-augmented variants on the Qwen3-4B-Base model using three different data seeds (6, 21, and 42). As shown in Figure~\ref{fig:seed_robustness}, the PIPO-enhanced algorithms consistently converge to a higher performance plateau across seeds. The retrospective verification effectively prevents the policy from being derailed by spurious, seed-specific noise, showing that our framework stabilizes exploration regardless of the underlying data sampling stochasticity.

\subsection{Wall-Clock Efficiency Analysis}

\begin{figure}[h]
  \centering
  \includegraphics[width=0.7\columnwidth]{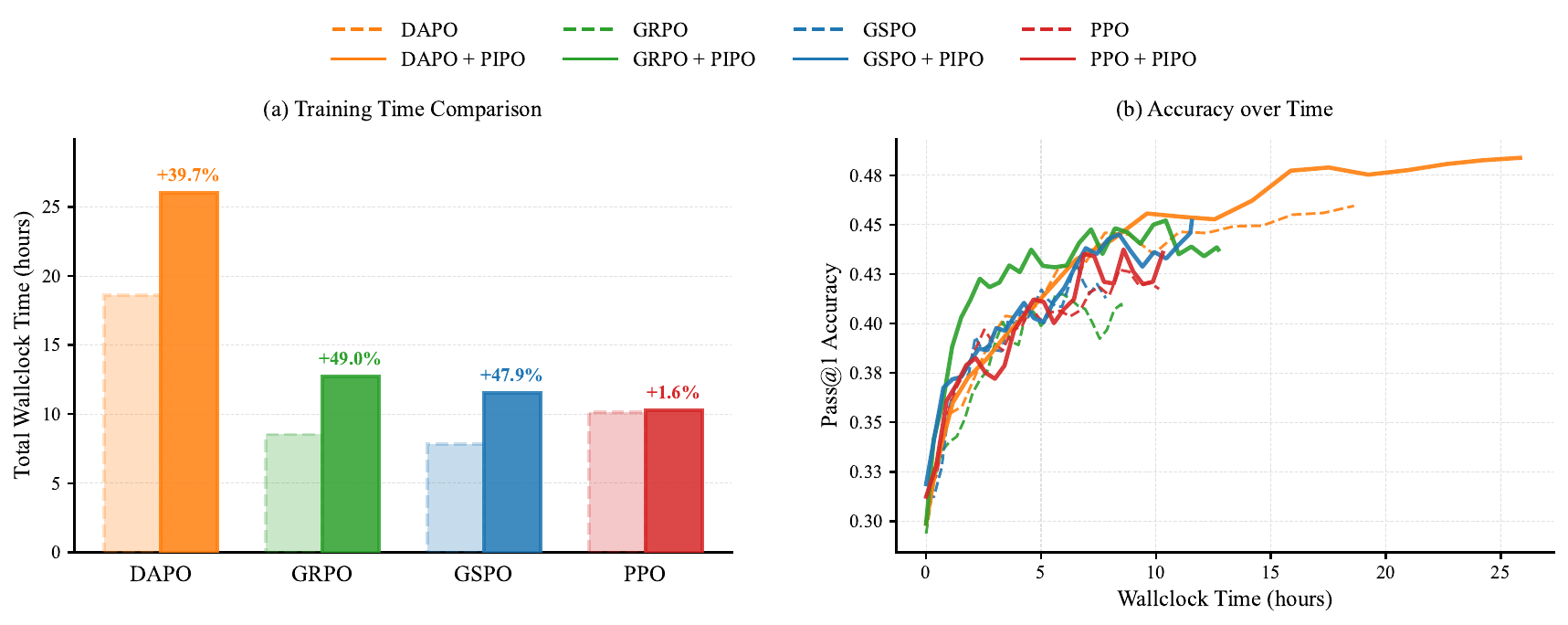}
  \caption{Computational efficiency analysis on Qwen3-4B-Base. (a) Total wall-clock time comparison. (b) Evolution of average Pass@1 accuracy with respect to wall-clock time.}
  \label{fig:efficiency}
\end{figure}

We analyze the computational efficiency in Figure~\ref{fig:efficiency}. PIPO introduces algorithm-dependent overhead: in PPO, the extra forward pass is largely amortized by the costly critic training and PPO update, yielding almost no additional wall-clock cost; in critic-free methods such as GRPO, GSPO, and DAPO, the same extra pass accounts for a larger fraction of training time and therefore leads to a more visible relative overhead. Nevertheless, \emph{PIPO consistently outperforms its open-loop counterparts in overall wall-clock efficiency}, achieving higher accuracy under comparable or moderately increased training time.

\section{Theoretical Proofs}
\label{sec:appx_theoretical}

\subsection{Proof of Theorem~\ref{prop:pirl_alignment}}
\label{sec:appx_pirl_proofs}

For notational convenience, denote the expected RL performance at iteration $t$ by 
$J_t := J_{\mathrm{RL}}(\theta_t)$. 

\begin{theorem}[Restatement of Theorem~\ref{prop:pirl_alignment}]
For a fixed initialization $\theta_0$, maximizing the ideal PIRL objective exactly maximizes the final policy performance:
\[
\arg\max_{\{\theta_t\}_{t=1}^T} \sum_{t=1}^T \mathbb{E}[ \Delta J_t ]
=
\arg\max_{\theta_T} J_{\mathrm{RL}}(\theta_T).
\]
\end{theorem}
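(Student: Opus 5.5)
The plan is to use a telescoping identity and then compare the two optimization problems through the map that sends a policy sequence to its last element. In the notation $J_t := J_{\mathrm{RL}}(\theta_t)$, with $\theta_0$ fixed, Definition~\ref{def:policy_improvement} gives $\Delta J_t = J_{t+1} - J_t$. One indexing point needs settling first. The objective in Definition~\ref{def:pirl_objective} sums from $t=1$ to $T$, while the statement concerns $\theta_T$. I will read the sum as the $T$ transitions $\theta_0\to\theta_1\to\cdots\to\theta_T$, i.e.\ write $\Delta J_t = J_{\mathrm{RL}}(\theta_t)-J_{\mathrm{RL}}(\theta_{t-1})$ for $t=1,\dots,T$. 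This is consistent with ``fixed initialization $\theta_0$'' and with the decision variables being $\{\theta_t\}_{t=1}^T$.

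\textbf{Step 1: telescoping.} By linearity of expectation,
\[
\sum_{t=1}^T \mathbb{E}[\Delta J_t] = \mathbb{E}\Big[\sum_{t=1}^T \big(J_{\mathrm{RL}}(\theta_t)-J_{\mathrm{RL}}(\theta_{t-1})\big)\Big] = \mathbb{E}[J_{\mathrm{RL}}(\theta_T)] - J_{\mathrm{RL}}(\theta_0).
\]
Here $J_{\mathrm{RL}}(\theta_0)$ is a constant because $\theta_0$ is fixed. The expectation is over any randomness in the iterates. For deterministic choices of the sequence it disappears, and for randomized ones the same identity holds in expectation.

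\textbf{Step 2: equivalence of maximizers.} The PIRL objective is now $F(\theta_1,\dots,\theta_T) = J_{\mathrm{RL}}(\theta_T) - c$ with $c$ constant. It depends only on the last coordinate, through a strictly increasing affine map of $J_{\mathrm{RL}}(\theta_T)$. So a sequence maximizes $F$ exactly when its final element $\theta_T$ maximizes $J_{\mathrm{RL}}$. The intermediate $\theta_1,\dots,\theta_{T-1}$ are unconstrained by optimality, so the $\arg\max$ on the left is the set of all sequences whose last element lies in $\arg\max_{\theta} J_{\mathrm{RL}}(\theta)$. I will show both inclusions explicitly:
\begin{itemize}
\item If $\theta_T^\star$ maximizes $J_{\mathrm{RL}}$, then any sequence ending at $\theta_T^\star$ attains $\sup F$.
\item Conversely, if some optimal sequence had a suboptimal endpoint, replacing that endpoint with a maximizer of $J_{\mathrm{RL}}$ would strictly increase $F$, a contradiction.
\end{itemize}
I will also note that one set is well defined (possibly empty) exactly when the other is, so the identity holds even when no maximizer exists.

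\textbf{Main obstacle.} The mathematics is trivial; the difficulty is interpreting the statement. The left side is a set of sequences and the right side a set of single parameters, so ``$=$'' has to mean equality after projecting onto the final iterate. I will state this explicitly and check that the sum's indexing really telescopes to $J_T - J_0$, not $J_{T+1}-J_1$. If one instead insists on $\Delta J_t = J_{t+1}-J_t$ for $t=1,\dots,T$, the same argument goes through with $\theta_1$ fixed and $\theta_{T+1}$ as the endpoint, and I would record this as a remark. I would also mention that if the iterates are constrained, for example by an algorithm's update rule, the same argument gives equality over the feasible set of endpoints.
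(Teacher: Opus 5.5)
Your proposal is correct and follows the paper's proof: telescope $\sum_{t=1}^T \Delta J_t$ to $J_T - J_0$, note that $J_0$ is fixed by $\theta_0$, and conclude that the two $\arg\max$ sets coincide once the left side is read as a condition on the final iterate. Your care with the indexing is warranted, because the paper's proof silently uses $\Delta J_t = J_t - J_{t-1}$, whereas Definition~\ref{def:policy_improvement} defines $\Delta J_t = J_{\mathrm{RL}}(\theta_{t+1}) - J_{\mathrm{RL}}(\theta_t)$.
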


\begin{proof}
By Definition~\ref{def:policy_improvement}, $\Delta J_t = J_t - J_{t-1}$. Summing over $T$ iterations gives the telescoping sum

\begin{align*}
\sum_{t=1}^T \Delta J_t
&= \sum_{t=1}^T (J_t - J_{t-1})  \\
&= (J_1-J_0) + (J_2-J_1) + \cdots + (J_T-J_{T-1}) \\
&= J_T - J_0.
\end{align*}

Since $J_0$ is determined by the fixed initialization $\theta_0$, it is constant with respect to the optimization trajectory $\{\theta_t\}_{t=1}^T$. Therefore
\[
\arg\max_{\{\theta_t\}_{t=1}^T} 
\sum_{t=1}^T \mathbb{E}[\Delta J_t]
=
\arg\max_{\theta_T} \mathbb{E}[J_T - J_0]
=
\arg\max_{\theta_T} J_T .
\]
This proves the result.
\end{proof}

\paragraph{Smoothed historical anchor.}
The sliding-window anchor used in PIPO can be viewed as a smoothed version of the one-step improvement signal. Define the idealized $K$-step smoothed improvement as
\[
\Delta J_t^{(K)}
:=
J_t
-
\frac{1}{K}\sum_{k=1}^{K}J_{t-k}.
\]
Assuming the policies before optimization are fixed to the initial model, their performance contributes only a constant offset. Summing the smoothed improvements gives
\begin{align*}
\sum_{t=1}^{T}\Delta J_t^{(K)}
&=
\sum_{t=1}^{T}J_t
-
\frac{1}{K}\sum_{t=1}^{T}\sum_{k=1}^{K}J_{t-k} \\
&=
\sum_{m=T-K+1}^{T}
\left(1-\frac{T-m}{K}\right)J_m
+ C_{\mathrm{init}} \\
&=
\sum_{i=0}^{K-1}
\frac{K-i}{K}J_{T-i}
+ C_{\mathrm{init}},
\end{align*}
where $C_{\mathrm{init}}$ absorbs terms determined by the fixed initialization. Thus, maximizing cumulative smoothed improvement is equivalent to maximizing a positively weighted average of the final $K$ policy performances. This justifies using the historical anchor as a variance-reduced surrogate for recent policy improvement, while the proof of Proposition~\ref{prop:approx_pirl_ascent} uses the one-step improvement $\Delta J_t$ as its idealized local counterpart.

\subsection{Proof of Proposition~\ref{prop:approx_pirl_ascent}}
\label{sec:proof_approx_pirl_ascent}

\begin{assumption}[Empirical Improvement Consistency]
\label{assump:local_improvement_consistency}
Let $\Delta J_t := J_{\mathrm{RL}}(\theta_t)-J_{\mathrm{RL}}(\theta_{t-1})$ denote the realized one-step policy improvement. The standardized feedback is sign-consistent with this realized improvement:
\[
\phi_\lambda(\xi_t)
\Delta J_t
\ge 0.
\]
\end{assumption}

This assumption is mild in the local regime considered here. The quantity $\mu_t$ is a Monte Carlo estimate of the current policy performance, while $\mu_{\mathrm{his}}$ is a smoothed empirical anchor for recent performance. Thus, $\xi_t$ is a standardized empirical estimate of policy improvement relative to this anchor; increasing the number of verifier evaluations reduces the variance of this estimate. Following the smoothed-anchor note above, the proof below uses the one-step improvement $\Delta J_t$ as the idealized local counterpart of this empirical signal.

\begin{proposition}[Restatement of Proposition~\ref{prop:approx_pirl_ascent}]
Let $\mathcal{H}_{t-1}$ denote the training history up to iteration $t-1$. For each historical learning unit $u_{t-1,i}$ with local attribution $a_{t-1,i}$, the PI reward satisfies:
\begin{enumerate}[label=(\alph*)]
\item Conditioned on $\mathcal{H}_{t-1}$ and $\mathcal{B}_{t-1}$,
\[
\mathbb{E}\!\left[\hat r^{\mathrm{PI}}_{t,i}\mid \mathcal{H}_{t-1},\mathcal{B}_{t-1}\right]
=
a_{t-1,i}\,
\mathbb{E}\!\left[\phi_\lambda(\xi_t)\mid \mathcal{H}_{t-1},\mathcal{B}_{t-1}\right].
\]
\item Under Assumption~\ref{assump:local_improvement_consistency}, a local bounded-gradient condition, and the local trust-region approximation in Section~\ref{subsec:general_modulation}, the retrospective PIPO update is locally aligned with the first-order improvement direction of the PIRL objective:
\[
\left\langle \nabla J_{\mathrm{RL}}(\theta_t), \Delta\theta_{\mathrm{PI}}\right\rangle
\ge
-\mathcal{O}(\|\theta_t-\theta_{t-1}\|).
\]
\end{enumerate}
\end{proposition}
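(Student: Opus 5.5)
Part (a) needs only a measurability observation. Conditioned on $\mathcal{H}_{t-1}$ and $\mathcal{B}_{t-1}$, the local attribution $a_{t-1,i}$ is fixed. It is computed by the attribution rule $a(\cdot)$ from $\mathcal{B}_{t-1}$ and the parameters $\theta_{t-1}$, and for PPO through the critic $V_\psi$ and for SDPO through the self-teacher, all of which are part of the history. By Eq.~\eqref{eq:meta_reward_def}, $\hat r^{\mathrm{PI}}_{t,i}=a_{t-1,i}\phi_\lambda(\xi_t)$. The only remaining randomness sits in $\xi_t$, through the fresh batch $\mathcal{B}_t\sim\pi_{\theta_t}$ entering $\mu_t$, since $\theta_t$ is itself determined by $\mathcal{H}_{t-1}$ and $\mathcal{B}_{t-1}$. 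Pulling the measurable factor out of the conditional expectation gives (a) immediately. No integrability issue arises, because rewards are bounded and $\phi_\lambda$ is Lipschitz; I will assume $\sigma_{\mathrm{his}}>0$, or else truncate $\xi_t$.

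For part (b), I start from the local decomposition stated in Section~\ref{subsec:general_modulation}, $\Delta\theta_{\mathrm{PI}} = c_t g^{\mathrm{base}}_{t-1} + \mathcal{O}(\|\theta_t-\theta_{t-1}\|)$ with $c_t=\alpha\,\phi_\lambda(\xi_t)$ for some $\alpha>0$. Taking the inner product with $\nabla J_{\mathrm{RL}}(\theta_t)$, the local bounded-gradient condition $\|\nabla J_{\mathrm{RL}}\|\le L$ turns the remainder into a term of size $-\mathcal{O}(\|\theta_t-\theta_{t-1}\|)$. It then remains to show
\[
\phi_\lambda(\xi_t)\,\langle \nabla J_{\mathrm{RL}}(\theta_t), g^{\mathrm{base}}_{t-1}\rangle \ge -\mathcal{O}(\|\theta_t-\theta_{t-1}\|).
\]

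The key link is that the previous exploration step was (up to the PI correction at $t-1$, which I absorb into the same order) $\theta_t-\theta_{t-1}=\alpha_{\mathrm{std}} g^{\mathrm{base}}_{t-1}$. A first-order Taylor expansion then gives $\Delta J_t = \alpha_{\mathrm{std}}\langle \nabla J_{\mathrm{RL}}(\theta_{t-1}), g^{\mathrm{base}}_{t-1}\rangle + \mathcal{O}(\|\theta_t-\theta_{t-1}\|^2)$. Replacing $\nabla J_{\mathrm{RL}}(\theta_{t-1})$ by $\nabla J_{\mathrm{RL}}(\theta_t)$ costs another $\mathcal{O}(\|\theta_t-\theta_{t-1}\|)\|g^{\mathrm{base}}_{t-1}\|$, using Lipschitz gradients as part of the local regularity. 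Hence
\[
\alpha_{\mathrm{std}}\langle \nabla J_{\mathrm{RL}}(\theta_t), g^{\mathrm{base}}_{t-1}\rangle = \Delta J_t + \mathcal{O}(\|\theta_t-\theta_{t-1}\|^2).
\]
Multiplying by $\phi_\lambda(\xi_t)$ and invoking Assumption~\ref{assump:local_improvement_consistency}, $\phi_\lambda(\xi_t)\Delta J_t\ge0$, yields the desired sign up to a remainder. Here $|\phi_\lambda(\xi_t)|$ is bounded, because $\xi_t$ is bounded when rewards are bounded and $\sigma_{\mathrm{his}}$ is bounded away from zero.

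The main obstacle is the bookkeeping of orders. Dividing the $\mathcal{O}(\|\theta_t-\theta_{t-1}\|^2)$ Taylor remainder by $\alpha_{\mathrm{std}}$, and using $\|\theta_t-\theta_{t-1}\|\approx\alpha_{\mathrm{std}}\|g^{\mathrm{base}}_{t-1}\|$ with bounded $g^{\mathrm{base}}$, leaves exactly $\mathcal{O}(\|\theta_t-\theta_{t-1}\|)$. This requires treating $\alpha_{\mathrm{std}}\|g^{\mathrm{base}}\|$ as comparable to the step size, so I will state this explicitly as part of the local trust-region approximation. A second point to check is that the constant in $c_t\propto\phi_\lambda(\xi_t)$ is positive, so that the sign of $c_t$ equals the sign of $\phi_\lambda(\xi_t)$; this holds because $\alpha_{\mathrm{PI}}>0$ and the clipped surrogate has unit slope at $\rho=1$.
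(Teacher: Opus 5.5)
Your proposal is correct and follows the paper's proof essentially step for step: part (a) pulls the $(\mathcal{H}_{t-1},\mathcal{B}_{t-1})$-measurable factor $a_{t-1,i}$ out of the conditional expectation, and part (b) combines $\Delta\theta_{\mathrm{PI}}=c_t g^{\mathrm{base}}_{t-1}+\varepsilon_t$, a first-order Taylor expansion of $\Delta J_t$ along the previous base step, and the sign-consistency assumption. The only difference is that the paper derives the decomposition from $\nabla_\theta\mathcal{J}_{\mathrm{PI}}(\theta_t)=\phi_\lambda(\xi_t)\gamma_{t-1}$ with $\gamma_{t-1}=g^{\mathrm{base}}_{t-1}+\mathcal{O}(\|\theta_t-\theta_{t-1}\|)$, whereas you take it as the stated trust-region hypothesis. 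One caveat: the iteration-$(t-1)$ PI correction inside $\theta_t-\theta_{t-1}$ is first order (comparable to the base step when $\alpha_{\mathrm{PI}}\asymp\alpha_{\mathrm{std}}$), so it cannot genuinely be absorbed into the remainder, but the paper makes the same idealization by writing $\theta_t=\theta_{t-1}+\alpha_{\mathrm{base}}g^{\mathrm{base}}_{t-1}$, so your argument matches the paper's level of rigor.
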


\begin{proof}
For part (a), Eq.~\eqref{eq:meta_reward_def} gives
\[
\hat r^{\mathrm{PI}}_{t,i}
=
a_{t-1,i}\phi_\lambda(\xi_t).
\]
Conditioned on $\mathcal{H}_{t-1}$ and $\mathcal{B}_{t-1}$, the local attribution $a_{t-1,i}$ is already fixed by the previous batch and the base RL algorithm. The remaining randomness comes from the empirical improvement feedback evaluated at iteration $t$. Therefore,
\[
\mathbb{E}\!\left[\hat r^{\mathrm{PI}}_{t,i}\mid \mathcal{H}_{t-1},\mathcal{B}_{t-1}\right]
=
a_{t-1,i}\,
\mathbb{E}\!\left[\phi_\lambda(\xi_t)\mid \mathcal{H}_{t-1},\mathcal{B}_{t-1}\right].
\]

For part (b), consider the retrospective PI objective over the historical learning units $u_{t-1,i}$:
\[
\mathcal{J}_{\mathrm{PI}}(\theta)
=
\mathbb{E}\!\left[
\sum_i
w_i
\rho_i(\theta)\,
\hat r^{\mathrm{PI}}_{t,i}
\right],
\]
where $w_i$ denotes the averaging weight used by the base RL objective, and $\rho_i(\theta)$ is instantiated at the token or sequence level depending on the underlying objective. Differentiating at $\theta_t$ and substituting Eq.~\eqref{eq:meta_reward_def} gives
\[
\nabla_\theta \mathcal{J}_{\mathrm{PI}}(\theta_t)
=
\phi_\lambda(\xi_t)\,
\gamma_{t-1},
\]
where
\[
\gamma_{t-1}
=
\mathbb{E}\!\left[
\sum_i
w_i
\rho_i(\theta_t)\,
a_{t-1,i}\,
\nabla_\theta \ell_i(\theta_t)
\right].
\]
Here, $\ell_i(\theta)$ denotes the corresponding token- or sequence-level log-likelihood term. Under a local trust-region approximation, evaluating the historical learning units at $\theta_t$ only introduces local deviations from the base update direction evaluated around $\theta_{t-1}$:
\[
\gamma_{t-1}
=
g_{t-1}^{\mathrm{base}}
+
\mathcal{O}(\|\theta_t-\theta_{t-1}\|).
\]
Thus the retrospective PIPO update can be written as
\[
\Delta\theta_{\mathrm{PI}}
=
c_t g_{t-1}^{\mathrm{base}}+\varepsilon_t,
\qquad
c_t \propto \phi_\lambda(\xi_t),
\qquad
\|\varepsilon_t\|=\mathcal{O}(\|\theta_t-\theta_{t-1}\|).
\]
Because the PI learning rate is positive, $c_t$ has the same sign as $\phi_\lambda(\xi_t)$. Taking the inner product with the local RL gradient gives
\[
\left\langle \nabla J_{\mathrm{RL}}(\theta_t), \Delta\theta_{\mathrm{PI}}\right\rangle
=
c_t
\left\langle \nabla J_{\mathrm{RL}}(\theta_t), g_{t-1}^{\mathrm{base}}\right\rangle
+
\left\langle \nabla J_{\mathrm{RL}}(\theta_t), \varepsilon_t\right\rangle .
\]
It remains to relate the empirical improvement feedback to the inner product term. Write the previous base update locally as
\[
\theta_t
=
\theta_{t-1}
+
\alpha_{\mathrm{base}} g_{t-1}^{\mathrm{base}},
\qquad
\alpha_{\mathrm{base}}>0.
\]
A first-order Taylor expansion gives
\[
\Delta J_t
=
\alpha_{\mathrm{base}}
\left\langle
\nabla J_{\mathrm{RL}}(\theta_{t-1}),
g_{t-1}^{\mathrm{base}}
\right\rangle
+
\mathcal{O}(\|\theta_t-\theta_{t-1}\|^2).
\]
Under the local trust-region approximation, $\nabla J_{\mathrm{RL}}(\theta_t)$ differs from $\nabla J_{\mathrm{RL}}(\theta_{t-1})$ by a local perturbation, so
\[
\left\langle
\nabla J_{\mathrm{RL}}(\theta_t),
g_{t-1}^{\mathrm{base}}
\right\rangle
=
\frac{\Delta J_t}{\alpha_{\mathrm{base}}}
+
\mathcal{O}(\|\theta_t-\theta_{t-1}\|).
\]
Since $c_t$ has the same sign as $\phi_\lambda(\xi_t)$, Assumption~\ref{assump:local_improvement_consistency} implies that the leading term of
$c_t\left\langle \nabla J_{\mathrm{RL}}(\theta_t), g_{t-1}^{\mathrm{base}}\right\rangle$
is non-negative, up to a local perturbation of order $\mathcal{O}(\|\theta_t-\theta_{t-1}\|)$. Under a local bounded-gradient condition, the perturbation term involving $\varepsilon_t$ is also bounded by $\mathcal{O}(\|\theta_t-\theta_{t-1}\|)$. Therefore,
\[
\left\langle \nabla J_{\mathrm{RL}}(\theta_t), \Delta\theta_{\mathrm{PI}}\right\rangle
\ge
-\mathcal{O}(\|\theta_t-\theta_{t-1}\|),
\]
which proves the approximate PIRL ascent statement.
\end{proof}

\subsection{Proofs of Theorem~\ref{the:grpo_misalignment} and Corollary~\ref{cor:boundary_sensitivity}}
\label{sec:appx_grpo}

\begin{theorem}[Restatement of Theorem~\ref{the:grpo_misalignment}]
Under the above conditions, the expected GRPO update direction satisfies
\[
g_{\mathrm{GRPO}}(\theta)
=
\eta(p(q;\theta)) \cdot g_{\mathrm{ideal}},
\]
where $g_{\mathrm{ideal}} = \nabla_\theta J_{\mathrm{RL}}(\theta)$ and the scaling factor is
\[
\eta(p)
=
\frac{\sum_{k=1}^{G-1}\sqrt{k(G-k)}\binom{G}{k}p^k (1-p)^{G-k}}{G\,p(1-p)\left(1 - p^G - (1-p)^G\right)}.
\]
Thus GRPO follows a state-dependent rescaling of the ideal RL gradient.
\end{theorem}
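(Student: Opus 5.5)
The plan is to compute the expected GRPO gradient for a single query $q$ with binary rewards, conditioned on the non-degenerate event $\mathcal{E}$, and compare it with $\nabla_\theta p(q;\theta)$. Under the local trust-region approximation we evaluate at $\rho_{i,\tau}=1$ with clipping inactive. The GRPO direction then reduces to the score-function form
\[
g_{\mathrm{GRPO}} = \mathbb{E}\!\left[\tfrac{1}{G}\textstyle\sum_i A_i \nabla_\theta \log \pi_\theta(y_i\mid q)\,\middle|\,\mathcal{E}\right].
\]
Here the per-token sums are collapsed into the sequence log-likelihood, treating length normalization as absorbed into the approximation.

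First, fix $S_q=k$ with $1\le k\le G-1$. Then $\mu_q=k/G$ and $\sigma_q=\sqrt{k(G-k)}/G$. Correct responses receive advantage $A^+=\sqrt{(G-k)/k}$ and incorrect ones $A^-=-\sqrt{k/(G-k)}$. Next, I would use exchangeability of the $G$ samples: condition on $S_q=k$ and on which indices are correct. Each correct response is then distributed as $\pi_\theta(\cdot\mid q,R=1)$, and each incorrect one as $\pi_\theta(\cdot\mid q,R=0)$.

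Write $s^\pm := \mathbb{E}[\nabla\log\pi\mid R=1/0]$. The identities $p\,s^+ = \nabla p$ and $(1-p)s^- = -\nabla p$ follow from $\mathbb{E}[\nabla\log\pi]=0$ and $\nabla p=\mathbb{E}[R\nabla\log\pi]$. With these, the conditional gradient given $S_q=k$ becomes
\[
\tfrac1G\big(kA^+ s^+ + (G-k)A^- s^-\big)=\tfrac1G\sqrt{k(G-k)}\,\big(s^+ - s^-\big)=\frac{\sqrt{k(G-k)}}{G\,p(1-p)}\nabla p.
\]

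Then I average over $k\sim\mathrm{Binomial}(G,p)$ restricted to $\mathcal{E}$. The conditioning normalizer is $P(\mathcal{E})=1-p^G-(1-p)^G$. This yields exactly $\eta(p)\nabla_\theta p$ with the stated $\eta$. Finally, for the single-query analysis, $\nabla p$ plays the role of $g_{\mathrm{ideal}}$ (the outer $\mathbb{E}_q$ passes through pointwise).

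The main obstacle is the conditioning step. The joint law of the responses given $S_q=k$ must factor into conditional laws by correctness, so I will spell out the exchangeability argument carefully. I also need to be explicit that conditioning on $\mathcal{E}$ is what produces the denominator factor, rather than averaging with zeros. If conditioning is not intended, the factor $1-p^G-(1-p)^G$ would disappear, and I would flag this convention.
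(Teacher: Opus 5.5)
Your proposal is correct and follows the paper's proof essentially step for step: the same trust-region reduction to the score-function form conditioned on $\mathcal{E}$, the same two-valued advantages $A^\pm$, the same per-$S$ coefficient $\sqrt{k(G-k)}/(G\,p(1-p))$, and the same binomial averaging normalized by $1-p^G-(1-p)^G$. The only difference is bookkeeping, and it is the same computation: the paper splits $\nabla_\theta\log\pi_\theta(y)=\nabla_\theta\log\mathbb{P}(R)+\nabla_\theta\log\pi_\theta(y\mid R)$ and drops the conditional score because it has zero mean, whereas you use the conditional means $s^\pm$ through $p\,s^+=\nabla p$ and $(1-p)s^-=-\nabla p$. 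As in the paper, identifying $\nabla p$ with $g_{\mathrm{ideal}}$ is a per-query statement, since $\eta(p(q;\theta))$ cannot be pulled outside $\mathbb{E}_q$.
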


To render the GRPO objective analytically tractable, we formally conduct our analysis under a fixed query $q$ and adopt the following standard premises: 
(i) \textbf{Non-degenerate Event:} We condition our analysis on the non-degenerate event $\mathcal{E} = \{1 \le S \le G-1\}$, where the gradient signal is non-zero.
(ii) \textbf{Local Trust Region:} We assume the policy remains close to the behavior policy ($\pi_\theta \approx \pi_{\theta_{\text{old}}}$), abstracting away the effects of the clipping mechanism.
(iii) \textbf{i.i.d. Rollouts:} Following standard parallel decoding mechanisms, the $G$ responses are sampled independently and identically given $q$. This guarantees that the number of successful responses $S$ follows a binomial distribution $S \sim \text{Binomial}(G, p)$, where $p = p(q;\theta)$ is the policy success probability.

Based on the simplifications above and the non-degenerate event $\mathcal{E} = \{1 \le S \le G-1\}$, we have the policy gradient of GRPO:
\begin{equation}
\label{eq:appx_grpo_estimator}
g_{\text{GRPO}}
=
\mathbb{E} \left[ \frac{1}{G} \sum_{i=1}^G \nabla_\theta \log \pi_\theta(y_{i}) \frac{R_i - \hat{\mu}}{\hat{\sigma}} \;\middle|\; \mathcal{E} \right].
\end{equation}

\begin{proof}[Proof of Theorem~\ref{the:grpo_misalignment}]
For any given state $S \in \mathcal{E}$, exactly $S$ responses receive a reward of $1$, and the remaining $G-S$ responses receive $0$. The group-level reward mean is therefore $\mu = \frac{S}{G}$. The empirical variance can be explicitly computed by summing over the correct and incorrect responses:
\begin{equation}\label{eq:variance_derivation}
\begin{split}
    \sigma^2 
    &= \frac{1}{G} \left[ S \left(1 - \frac{S}{G}\right)^2 + (G-S) \left(0 - \frac{S}{G}\right)^2 \right] \\
    &= \frac{S(G-S)}{G^2}.
\end{split}
\end{equation}

Substituting $\mu$ and $\sigma$ into the advantage $A_i = (R_i - \mu) / \sigma$, the advantage collapses into two discrete values for correct ($R_i=1$, denoted as $A^+$) and incorrect ($R_i=0$, denoted as $A^-$) responses:
\begin{equation}\label{eq:discrete_advantages}
A^+ = \frac{1 - S/G}{\sqrt{S(G-S)}/G} = \sqrt{\frac{G-S}{S}}, \qquad 
A^- = \frac{0 - S/G}{\sqrt{S(G-S)}/G} = -\sqrt{\frac{S}{G-S}}.
\end{equation}

Let $C = \{i : R_i = 1\}$ be the indices of the successful responses. We can decompose the marginal probability of any response $y$ by conditioning on its reward outcome $R$: $\pi_\theta(y) = \mathbb{P}(R) \cdot \pi_\theta(y \mid R)$.
Taking the gradient of the log-probability yields:
\begin{equation}
\nabla_\theta \log \pi_\theta(y) = 
\begin{cases}
\frac{\nabla_\theta p_t}{p_t} + \nabla_\theta \log \pi_\theta(y \mid R=1), & \text{if } R(y)=1, \\[6pt]
-\frac{\nabla_\theta p_t}{1-p_t} + \nabla_\theta \log \pi_\theta(y \mid R=0), & \text{if } R(y)=0.
\end{cases}
\end{equation}

We now substitute this decomposition to aggregate the gradient contributions across the sampled group:
\begin{equation}\label{eq:16}
\begin{split}
    &\sum_{i=1}^G \nabla_\theta \log \pi_\theta(y_{i}) A_i \\
    ={}& A^+ \sum_{i \in C} \nabla_\theta \log \pi_\theta(y_i) + A^- \sum_{j \notin C} \nabla_\theta \log \pi_\theta(y_j) \\
    ={}& \left( S \cdot A^+ \cdot \frac{1}{p_t} - (G-S) \cdot A^- \cdot \frac{1}{1-p_t} \right) \nabla_\theta p_t \\
    &+ A^+ \sum_{i \in C} \nabla_\theta \log \pi_\theta(y_i \mid R=1) + A^- \sum_{j \notin C} \nabla_\theta \log \pi_\theta(y_j \mid R=0).
\end{split}
\end{equation}

Substituting the explicit values of $A^+$ and $A^-$, the coefficient of $\nabla_\theta p_t$ simplifies exactly to:
\[
S \sqrt{\frac{G-S}{S}} \frac{1}{p_t} - (G-S) \left(-\sqrt{\frac{S}{G-S}}\right) \frac{1}{1-p_t} = \frac{\sqrt{S(G-S)}}{p_t(1-p_t)}.
\]

Now, we take the conditional expectation of this entire expression given a fixed number of successes $S$. Crucially, conditioned on $S$, the advantage values ($A^+, A^-$) and the marginal probability $p_t$ are strictly deterministic constants. The randomness comes entirely from the specific realization of the responses $y_i$ drawn from their respective conditional distributions $\pi_\theta(y \mid R)$. By the linearity of expectation, we can factor out the constant part:
\begin{equation}
\begin{split}
    &\mathbb{E}_{y_i \mid S} \left[ \sum_{i=1}^G \nabla_\theta \log \pi_\theta(y_{i}) A_i \right] \\
    ={}& \mathbb{E}_{y_i \mid S} \left[ \frac{\sqrt{S(G-S)}}{p_t(1-p_t)} \nabla_\theta p_t + A^+ \sum_{i \in C} \nabla_\theta \log \pi_\theta(y_i \mid R=1) + A^- \sum_{j \notin C} \nabla_\theta \log \pi_\theta(y_j \mid R=0) \right] \\
    ={}& \frac{\sqrt{S(G-S)}}{p_t(1-p_t)} \nabla_\theta p_t + A^+ \sum_{i \in C} \mathbb{E}_{y \sim \pi(\cdot \mid R=1)}\left[\nabla_\theta \log \pi_\theta(y \mid R=1)\right] + A^- \sum_{j \notin C} \mathbb{E}_{y \sim \pi(\cdot \mid R=0)}\left[\nabla_\theta \log \pi_\theta(y \mid R=0)\right].
\end{split}
\end{equation}

For any valid conditional probability distribution, the expectation of its score function is identically zero:
\[
\mathbb{E}_{y \sim \pi_\theta(\cdot \mid R)}\left[ \nabla_\theta \log \pi_\theta(y \mid R) \right] = \nabla_\theta \sum_{y} \pi_\theta(y \mid R) = \nabla_\theta 1 = 0.
\]
Therefore, the residual expectation terms vanish entirely. The expected gradient contribution conditioned purely on the state $S$ simplifies exactly to:
\[
\mathbb{E} \left[ \sum_{i=1}^G \nabla_\theta \log \pi_\theta(y_{i}) A_i \;\middle|\; S \right] = \frac{\sqrt{S(G-S)}}{p_t(1-p_t)} \nabla_\theta p_t.
\]

Taking the full expectation over $S \in \mathcal{E}$, and recognizing that $\nabla_\theta p_t = g_{\mathrm{ideal}}$, we obtain:
\begin{equation}\label{eq:grpo_expectation_form}
g_{\text{GRPO}} = \left( \frac{\mathbb{E}[\sqrt{S(G-S)} \mid \mathcal{E}]}{G \cdot p_t(1-p_t)} \right) \cdot g_{\mathrm{ideal}}.
\end{equation}

To derive the exact analytical form of the scaling factor $\eta(p)$, we expand the conditional expectation. Given the i.i.d. premise, the unconditional probability mass function of $S$ is $\mathbb{P}(S=k) = \binom{G}{k} p^k (1-p)^{G-k}$. 
The probability of the non-degenerate event $\mathcal{E}$ is the complement of the extremes ($S=0$ and $S=G$):
\[
\mathbb{P}(\mathcal{E}) = 1 - p^G - (1-p)^G.
\]
Thus, the conditional expectation can be explicitly written as:
\begin{equation}
\label{eq:eta_expansion}
\mathbb{E}[\sqrt{S(G-S)} \mid \mathcal{E}] 
= \sum_{k=1}^{G-1} \sqrt{k(G-k)} \frac{\mathbb{P}(S=k)}{\mathbb{P}(\mathcal{E})} 
= \frac{\sum_{k=1}^{G-1} \sqrt{k(G-k)} \binom{G}{k} p^k (1-p)^{G-k}}{1 - p^G - (1-p)^G}.
\end{equation}
Plugging Eq.~\eqref{eq:eta_expansion} back into Eq.~\eqref{eq:grpo_expectation_form} directly yields the final expression for $\eta(p)$ presented in Theorem~\ref{the:grpo_misalignment}. 
\end{proof}

\begin{corollary}[Restatement of Corollary~\ref{cor:boundary_sensitivity}]
At the probability boundaries ($p \to 0$ or $p \to 1$), the gradient scaling factor exhibits a symmetric singularity and diverges to infinity:
\[
\eta(p) \sim \frac{\sqrt{G-1}}{G \cdot p(1 - p)} \longrightarrow \infty.
\]
\end{corollary}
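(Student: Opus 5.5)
The plan is to work directly from the closed form of $\eta(p)$ in Theorem~\ref{the:grpo_misalignment}. First we note a symmetry, then we do a leading-order expansion of numerator and denominator separately as $p \to 0$.

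\textbf{Step 1 (symmetry).} Substitute $k \mapsto G-k$ in the numerator sum. The weight $\sqrt{k(G-k)}\binom{G}{k}$ is invariant under this substitution, while $p^k(1-p)^{G-k}$ becomes $(1-p)^k p^{G-k}$. The denominator $G\,p(1-p)\bigl(1-p^G-(1-p)^G\bigr)$ is visibly invariant under $p \leftrightarrow 1-p$. Hence $\eta(p)=\eta(1-p)$, and the target expression $\sqrt{G-1}/(G\,p(1-p))$ is also symmetric. So it suffices to treat $p\to 0$; the case $p \to 1$ follows by reflection.

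\textbf{Step 2 (numerator as $p\to0$).} For $k\ge 1$ the $k$-th term is $\mathcal{O}(p^k)$, so the sum is dominated by $k=1$. That term equals $\sqrt{G-1}\cdot G\cdot p(1-p)^{G-1}$, which is $G\sqrt{G-1}\,p\,(1+\mathcal{O}(p))$. The terms with $k\ge 2$ contribute $\mathcal{O}(p^2)$. (We assume $G\ge 2$ so that the sum is nonempty.)

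\textbf{Step 3 (denominator as $p\to0$).} Expand $1-(1-p)^G-p^G = Gp+\mathcal{O}(p^2)$, where $p^G$ is absorbed into the error for $G\ge2$. The denominator is therefore $G\,p(1-p)\cdot Gp\,(1+\mathcal{O}(p))$.

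\textbf{Step 4 (combine).} Taking the ratio,
\[
\eta(p)=\frac{G\sqrt{G-1}\,p\,(1+\mathcal{O}(p))}{G\,p(1-p)\,Gp\,(1+\mathcal{O}(p))}=\frac{\sqrt{G-1}}{G\,p(1-p)}\bigl(1+\mathcal{O}(p)\bigr).
\]
This gives the claimed equivalence $\sim$. Divergence to $\infty$ is then immediate, since $p(1-p)\to 0$. Applying Step 1 yields the same statement as $p\to1$.

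\textbf{Main obstacle.} There is no deep difficulty. The only point requiring care is keeping the relative error terms uniform. Specifically, one must not drop the factor $(1-p)$ inconsistently, and one must check that the $\mathcal{O}(p^2)$ corrections in the numerator and the denominator are both of relative order $\mathcal{O}(p)$. Writing each expansion as a leading term times $(1+\mathcal{O}(p))$, as in Steps 2 and 3, handles this cleanly.
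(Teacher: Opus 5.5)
Your proof is correct and follows essentially the same route as the paper: isolate the $k=1$ term of the numerator, expand $1-p^G-(1-p)^G = Gp+\mathcal{O}(p^2)$, take the ratio, and obtain the case $p\to 1$ by symmetry. Your explicit check that $\eta(p)=\eta(1-p)$ under $k\mapsto G-k$ is slightly more careful than the paper's one-line appeal to symmetry.
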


\begin{proof}[Proof of Corollary~\ref{cor:boundary_sensitivity}]
From Theorem~\ref{the:grpo_misalignment},
\[
\eta(p)=
\frac{\sum_{k=1}^{G-1}\sqrt{k(G-k)}\binom{G}{k}p^k(1-p)^{G-k}}
{G\,p(1-p)\left(1-p^G-(1-p)^G\right)} .
\]

As $p\to0$, the leading term of the numerator is $k=1$:
\[
\sum_{k=1}^{G-1}\sqrt{k(G-k)}\binom{G}{k}p^k(1-p)^{G-k}
=
\sqrt{G-1}\,G p(1-p)^{G-1}+\mathcal O(p^2).
\]

For the denominator,
\[
(1-p)^G = 1-Gp+\mathcal O(p^2),
\qquad
1-p^G-(1-p)^G = Gp+\mathcal O(p^2).
\]

Taking the ratio gives
\[
\lim_{p\to0}
\frac{\sqrt{G-1}\,G p(1-p)^{G-1}+\mathcal O(p^2)}
{Gp+\mathcal O(p^2)}
=
\sqrt{G-1}.
\]

Substituting back yields
\[
\eta(p)
=
\frac{\sqrt{G-1}+o(1)}{G\,p(1-p)}
\sim
\frac{\sqrt{G-1}}{G\,p(1-p)}.
\]

By symmetry of $\binom{G}{k}$ and $p(1-p)$, the same argument holds as
$p\to1$, implying $\eta(p)\to\infty$ at both boundaries.
\end{proof}

\subsection{Proof of Proposition~\ref{prop:rectification}}
\label{sec:appx_reconstruction}

\begin{proposition}[Restatement of Proposition~\ref{prop:rectification}]
Under bounded local updates, if the retrospective modulation uses the realized policy improvement $\Delta J_t$ as ideal feedback, PIPO can damp the leading boundary scaling factor in the group-relative case up to first-order approximation. In particular, the retrospective PI gradient remains a bounded modulation of the ideal success-probability gradient near the boundaries:
\begin{equation*}
    \mathbb{E}[\nabla_\theta \mathcal{J}_{\mathrm{PI}}]
    =
    \kappa(p)\nabla_\theta p + \mathcal{O}(\alpha_{\mathrm{std}}),
    \qquad
    \kappa(p)=\mathcal{O}(1),
    \quad \text{as } p \to 0 \text{ or } 1.
\end{equation*}
\end{proposition}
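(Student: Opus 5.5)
We work in the same stylized single-query group-relative setting as the proof of Theorem~\ref{the:grpo_misalignment}: fixed $q$, i.i.d. Bernoulli rollouts, local trust region, and conditioning on the non-degenerate event $\mathcal{E}$. The PI objective differs from the base GRPO objective in two ways. First, its reward is $\hat r^{\mathrm{PI}}_{t,i} = a^{\mathrm{group}}_{t-1,i}\,\Delta J_t$, with the idealized feedback $\Delta J_t$ replacing $\phi_\lambda(\xi_t)$. Second, it uses the normalized attribution $a^{\mathrm{group}}_{i} = G A_i / \sum_j |A_j|$ rather than the raw advantage $A_i$. The plan is to redo the conditional-on-$S$ computation of Theorem~\ref{the:grpo_misalignment} with this new reward, and then track the extra factor $\Delta J_t$ to first order in $\alpha_{\mathrm{std}}$.

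\textbf{Step 1: the normalized attribution removes the $\sqrt{S(G-S)}$ amplification.} Using the two-valued advantages $A^+=\sqrt{(G-S)/S}$ and $A^-=-\sqrt{S/(G-S)}$, we get
\[
\sum_j |A_j| = S A^+ + (G-S)|A^-| = 2\sqrt{S(G-S)} .
\]
Hence $a^+ = G(G-S)/(2\sqrt{S(G-S)}) \cdot 1/\sqrt{\cdot}$ simplifies to $a^+ = \tfrac{G}{2S}\cdot\tfrac{G-S}{G}\cdot\tfrac{G}{G-S}$; concretely, $a^+ = G/(2S)$ and $a^- = -G/(2(G-S))$. Substituting into the score decomposition of Eq.~\eqref{eq:16}, the coefficient of $\nabla_\theta p$ becomes
\[
\frac{S a^+}{p} - \frac{(G-S)a^-}{1-p} = \frac{G}{2p(1-p)} ,
\]
which does not depend on $S$. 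The conditional-score terms vanish in expectation exactly as before. So, before the feedback factor is applied, the retrospective gradient is $\tfrac{1}{2p(1-p)}\nabla_\theta p$ after the $1/G$ averaging. Taking the expectation over $S \mid \mathcal{E}$ is then trivial.

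\textbf{Step 2: the feedback factor supplies the compensating $p(1-p)$.} We expand
\[
\Delta J_t = J(\theta_t)-J(\theta_{t-1}) = \alpha_{\mathrm{std}}\langle \nabla p, g^{\mathrm{base}}_{t-1}\rangle + \mathcal{O}(\alpha_{\mathrm{std}}^2),
\]
and use the structure of the previous update to show that this factor is of order $p(1-p)$. For instance, if the previous step was itself a normalized group update, then $g^{\mathrm{base}}_{t-1} = \tfrac{1}{2p(1-p)}\nabla p$. This gives $\Delta J_t \approx \alpha_{\mathrm{std}}\|\nabla p\|^2/(2p(1-p))$.

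To make the product bounded, the cleaner route is to exploit the Bernoulli structure: $\nabla p = \mathbb{E}[(R-p)\nabla\log\pi]$. Under a bounded-score (local bounded-gradient) assumption, Cauchy--Schwarz gives
\[
\|\nabla p\| \le \sqrt{p(1-p)}\,B .
\]
Hence $\|\nabla p\|^2/(p(1-p)) = \mathcal{O}(1)$.

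\textbf{Step 3: assemble $\kappa(p)$.} We combine the two steps as
\[
\kappa(p) = \frac{c\,\Delta J_t}{2p(1-p)} .
\]
We absorb the $\alpha_{\mathrm{std}}$ scale into the normalization of the PI step (or into the first-order statement). The off-leading Taylor terms, together with the importance-ratio deviation $\rho^{\mathrm{PI}}-1$, are collected into the $\mathcal{O}(\alpha_{\mathrm{std}})$ remainder. The conclusion is $\kappa(p) = \mathcal{O}(1)$ as $p\to 0$ or $p\to1$. Comparing with Corollary~\ref{cor:boundary_sensitivity}, the $1/(p(1-p))$ singularity is cancelled.

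\textbf{Main obstacle.} The delicate part is Step~2: showing that $\Delta J_t/(p(1-p))$ stays bounded requires the bound $\|\nabla p\|^2 \lesssim p(1-p)$ together with control on how the previous update scales. I would first try the Cauchy--Schwarz variance bound under bounded scores. If that turns out to be insufficient, I would state the needed boundedness explicitly as part of the ``bounded local updates'' hypothesis.
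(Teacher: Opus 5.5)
Your Step 1 is correct and is the paper's computation in different bookkeeping. The normalized attributions $a^+=G/(2S)$ and $a^-=-G/(2(G-S))$ reproduce the paper's $\frac{G}{Z_{t-1}}g_{t-1}$ with $Z_{t-1}=2\sqrt{S(G-S)}$, and both give the $S$-independent leading term $\frac{\Delta J_t}{2p(1-p)}\nabla_\theta p$.

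The gap is in Step 2. For $\kappa(p)=\Delta J_t/(2p(1-p))$ to be bounded you need $\Delta J_t=\mathcal{O}(p(1-p))$. The Cauchy--Schwarz bound $\|\nabla_\theta p\|\le B\sqrt{p(1-p)}$ is a factor $\sqrt{p(1-p)}$ too weak to give this.

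\begin{itemize}
\item Along your own route, $\Delta J_t\approx\alpha_{\mathrm{std}}\|\nabla_\theta p\|^2/(2p(1-p))\le\alpha_{\mathrm{std}}B^2/2$. That is only $\mathcal{O}(\alpha_{\mathrm{std}})$, so $\kappa(p)\lesssim\alpha_{\mathrm{std}}B^2/(4p(1-p))$ keeps exactly the $1/(p(1-p))$ singularity you are trying to cancel.
\item Your observation that $\|\nabla_\theta p\|^2/(p(1-p))=\mathcal{O}(1)$ is true, but it is not the quantity in $\kappa$. That quantity is $\|\nabla_\theta p\|^2/(p(1-p))^2$.
\item Using a bounded step $\|\Delta\theta\|\le\epsilon$ instead, the same estimate gives only $\kappa=\mathcal{O}(1/\sqrt{p(1-p)})$.
\item If your score bound holds only in mean square, the loss is real. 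A score of size $\sim 1/\sqrt{p}$ on the rare successes, balanced on the failures, keeps $\mathbb{E}\|\nabla_\theta\log\pi_\theta\|^2$ bounded while making $\|\nabla_\theta p\|\sim\sqrt{p}$.
\item Your fallback of putting boundedness of $\Delta J_t/(p(1-p))$ into the hypothesis would simply assume the conclusion.
\end{itemize}

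The missing ingredient is the paper's Lemma~\ref{lemma:boundary_vanishing}. It uses a \emph{pointwise} bound $\|\nabla_\theta\log\pi_\theta(y)\|\le M$ directly, not through Cauchy--Schwarz. Writing $\nabla_\theta p=\sum_{y\in C}\pi_\theta(y)\nabla_\theta\log\pi_\theta(y)$ over the success set $C$ gives $\|\nabla_\theta p\|\le Mp$. Summing over failures instead gives $\|\nabla_\theta p\|\le M(1-p)$, so $\|\nabla_\theta p\|\le M\min\{p,1-p\}$. With the bounded-update hypothesis $\|\Delta\theta\|\le\epsilon$, this yields $\Delta J_t=\Delta p_t\le\epsilon M\min\{p,1-p\}\le 2\epsilon M\,p(1-p)$, hence $\kappa(p)=\mathcal{O}(1)$. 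This is how the paper closes the argument.

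Your route also works once this bound replaces Cauchy--Schwarz. You get $\Delta J_t\lesssim\alpha_{\mathrm{std}}M^2\min\{p,1-p\}^2/(p(1-p))$ and therefore $\kappa=\mathcal{O}(\alpha_{\mathrm{std}})$.

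Separately, you should check that the Taylor and importance-ratio remainder is uniform near the boundary, rather than just absorbing it. The paper does this using $\sqrt{S(G-S)}\ge\sqrt{G-1}$ on $\mathcal{E}$; in your notation this is $|a^\pm|\le G/2$.
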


\begin{lemma}
\label{lemma:boundary_vanishing}
For any differentiable policy $\pi_\theta$ with bounded log-gradients $\|\nabla_\theta \log \pi_\theta\| \le M$ (a regularity condition typically satisfied by Layer Normalization, Gradient Clipping and other methods in modern LLMs), the realizable probability improvement for a bounded update $\|\Delta\theta\| \le \epsilon$ satisfies:
\[
\Delta p_t = \mathcal{O}(\min\{p_t, 1-p_t\}).
\]
\end{lemma}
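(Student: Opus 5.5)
The plan is to write the success probability as an expectation over responses, $p_t = \sum_y \pi_\theta(y)\,R(y)$, and to control its first-order change through the log-gradient bound. First I will show that $\nabla_\theta p_t$ is small near both boundaries. Using the log-derivative trick,
\[
\nabla_\theta p_t = \sum_y \pi_\theta(y)R(y)\nabla_\theta\log\pi_\theta(y) = p_t\,\mathbb{E}_{y\sim\pi_\theta(\cdot\mid R=1)}\!\left[\nabla_\theta\log\pi_\theta(y)\right].
\]
With $\|\nabla_\theta\log\pi_\theta\|\le M$, this gives $\|\nabla_\theta p_t\|\le M p_t$. For the other boundary, I apply the same argument to $1-p_t=\sum_y\pi_\theta(y)(1-R(y))$. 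Since $\nabla_\theta(1-p_t)=-\nabla_\theta p_t$, this gives $\|\nabla_\theta p_t\|\le M(1-p_t)$. Together, $\|\nabla_\theta p_t\|\le M\min\{p_t,1-p_t\}$.

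Next I extend this pointwise bound to a bounded update $\|\Delta\theta\|\le\epsilon$. I will use the mean value theorem along the segment $\theta_s=\theta+s\Delta\theta$ for $s\in[0,1]$, which gives $|\Delta p_t|\le\epsilon\sup_s\|\nabla_\theta p(\theta_s)\|\le \epsilon M\sup_s\min\{p(\theta_s),1-p(\theta_s)\}$. The main obstacle is that this supremum is taken along the path, not at the starting point, so I need $p(\theta_s)$ to stay comparable to $p_t$.

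I will handle this with a Grönwall-type argument applied to $\log p$. Since $\|\nabla_\theta\log p\|=\|\nabla_\theta p\|/p\le M$, the function $\log p$ is $M$-Lipschitz in $\theta$. Hence $p(\theta_s)\le p_t e^{Ms\epsilon}$, and the same argument applied to $\log(1-p)$ gives $1-p(\theta_s)\le(1-p_t)e^{M\epsilon}$. Substituting these bounds yields
\[
|\Delta p_t|\le \epsilon M e^{M\epsilon}\min\{p_t,1-p_t\},
\]
and the constant $\epsilon M e^{M\epsilon}$ does not depend on $p_t$. This proves $\Delta p_t=\mathcal{O}(\min\{p_t,1-p_t\})$.

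One could also state the result only to first order, $\Delta p_t=\langle\nabla p_t,\Delta\theta\rangle+\mathcal{O}(\epsilon^2)$, which is consistent with the first-order scope of Proposition~\ref{prop:rectification}. However, the exponential bound avoids needing any uniformity of the remainder as $p_t\to 0$ or $1$, so I will use it.
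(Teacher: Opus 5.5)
Your proof is correct. Its first half is the paper's argument: the paper also writes $\nabla p_t=\sum_{y\in C}\pi_\theta(y)\nabla\log\pi_\theta(y)$, gets $\|\nabla p_t\|\le Mp_t$, and obtains $\|\nabla p_t\|\le M(1-p_t)$ by the same argument applied to the failure set.

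The two proofs differ in how they pass from the gradient to the finite change $\Delta p_t$.

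\textbf{The paper's route.} It linearizes, $\Delta p_t\approx(\nabla_\theta p_t)^\top\Delta\theta\le\epsilon M\min\{p_t,1-p_t\}$, and stops there. This is shorter and uses the log-gradient bound only at $\theta_t$. However, it never shows that the neglected remainder is itself $\mathcal{O}(\min\{p_t,1-p_t\})$, and that is the delicate point. An $\mathcal{O}(\epsilon^2)$ remainder whose constant does not depend on $p_t$ would dominate $\min\{p_t,1-p_t\}$ as $p_t\to0$ or $1$. Closing this on the paper's route would need an extra bound on $\nabla^2_\theta\log\pi_\theta$, which the paper does not state.

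\textbf{Your route.} The mean-value step, plus the observation that $\log p$ and $\log(1-p)$ are $M$-Lipschitz, avoids any second-order assumption. It gives the non-asymptotic inequality $|\Delta p_t|\le\epsilon Me^{M\epsilon}\min\{p_t,1-p_t\}$ for every $\epsilon$, with a constant independent of $p_t$. The price is that you need $\|\nabla_\theta\log\pi_\theta\|\le M$ uniformly along the segment from $\theta_t$ to $\theta_t+\Delta\theta$, which is the natural reading of the hypothesis.

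Two small remarks on your argument:
\begin{itemize}
\item Positivity of $p(\theta_s)$ along the path, needed for $\log p$ to be defined, follows from the same derivative bound: it gives $p(\theta_s)\ge p_te^{-Ms\epsilon}>0$. Full support of softmax policies also gives it.
\item Integrating $\frac{d}{ds}\log p(\theta_s)$ directly, which is bounded in absolute value by $M\epsilon$, skips the separate mean-value step. It gives the slightly sharper constant $e^{M\epsilon}-1\le\epsilon Me^{M\epsilon}$.
\end{itemize}
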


\begin{proof}
Let $C = \{y : R(y) = 1\}$. Using the relation $\nabla p_t = \sum_{y \in C} \pi_\theta(y) \nabla \log \pi_\theta(y)$ alongside the norm bound $\|\nabla \log \pi_\theta\| \le M$, we have:
\[
\|\nabla p_t\| \le \sum_{y \in C} \pi_\theta(y) \|\nabla \log \pi_\theta(y)\| \le M \cdot p_t.
\]

Symmetry over the failure cases yields $\|\nabla p_t\| \le M \cdot (1 - p_t)$. Combining both conditions gives:
\[
\|\nabla p_t\| \le M \min(p_t, 1-p_t).
\]

For a bounded parameter update $\|\Delta\theta\| \le \epsilon$, the realizable improvement approximates as $\Delta p_t \approx (\nabla_\theta p_t)^\top \Delta\theta$. Thus, $\Delta p_t \le \epsilon M \min(p_t, 1-p_t) = \mathcal{O}(\min\{p_t, 1-p_t\})$, confirming the Lemma.
\end{proof}

\begin{proof}[Proof of Proposition~\ref{prop:rectification}]
For brevity, let $Z_{t-1} = \sum_{j=1}^G |A(y_j)|$. At iteration $t$, the Policy Improvement Reward assigns a gated signal to samples from the previous batch $\mathcal{B}_{t-1}$:
\[
    \hat{r}^{\mathrm{PI}}_{t,i} = \frac{G \cdot A(y_i)}{Z_{t-1}} \cdot \Delta J_t.
\]
Here $\Delta J_t$ denotes the realized policy improvement used as ideal feedback for the analysis. PIPO retrospectively updates the policy via an importance-weighted objective:
\[
    \mathcal{J}_{\mathrm{PI}}(\theta) = \mathbb{E}_{q, \{y_i\} \sim \mathcal{B}_{t-1}} \!\left[ \frac{1}{G} \sum_{i=1}^G \frac{\pi_\theta(y_i \mid q)}{\pi_{\theta_{t-1}}(y_i \mid q)} \hat{r}^{\mathrm{PI}}_{t,i} \right].
\]

Evaluating the exact gradient at the current policy $\theta_t$ yields:
\begin{equation}
    \nabla_\theta \mathcal{J}_{\mathrm{PI}}(\theta_t) = \frac{G \,\Delta J_t}{Z_{t-1}} \left( \frac{1}{G} \sum_{i=1}^G \frac{\pi_{\theta_t}(y_i \mid q)}{\pi_{\theta_{t-1}}(y_i \mid q)} \nabla_\theta \log \pi_{\theta_t}(y_i \mid q) A(y_i) \right).
\end{equation}

Under the local trust region assumption $\theta_t = \theta_{t-1} + \mathcal{O}(\alpha_{\text{std}})$, a first-order Taylor expansion of the importance weight and score function decouples the gradient into the standard GRPO gradient $g_{t-1}$ plus a perturbation term:
\begin{equation}
\label{eq:pi_gradient_taylor}
    \nabla_\theta \mathcal{J}_{\mathrm{PI}}(\theta_t) = \frac{G \, \Delta J_t}{Z_{t-1}} \Big[ g_{t-1} + \mathcal{O}(\alpha_{\text{std}}) \Big],
\end{equation}
where $g_{t-1} = \frac{1}{G} \sum_{i=1}^G \nabla_\theta \log \pi_{\theta_{t-1}}(y_i \mid q) A(y_i)$.

From Theorem~\ref{the:grpo_misalignment} and Eq.~\eqref{eq:discrete_advantages}, given a specific batch outcome $S$, we have $g_{t-1} = \frac{\sqrt{S(G-S)}}{G \cdot p(1-p)} \nabla_\theta p$ and $Z_{t-1} = 2\sqrt{S(G-S)}$. Substituting these back yields:
\begin{equation}
    \nabla_\theta \mathcal{J}_{\text{PI}}(\theta_t) = \frac{\Delta J_t}{2 p(1-p)} \nabla_\theta p + \mathcal{O}\left( \frac{\alpha_{\text{std}} \Delta J_t}{\sqrt{S(G-S)}} \right).
\end{equation}

We now analyze the behavior at the probability boundaries. In the sparse-reward RL setting ($R \in \{0, 1\}$), the realized policy improvement is the success-probability shift, $\Delta J_t=\Delta p_t$. By Lemma~\ref{lemma:boundary_vanishing}, $\Delta J_t=\mathcal{O}(\min\{p,1-p\})=\mathcal{O}(p(1-p))$ near the boundaries. 

For the primary term, the scalar coefficient $\frac{\Delta J_t}{p(1-p)}$ remains strictly bounded as $p \to 0$ or $1$, offsetting the leading geometric singularity. For the perturbation term, since GRPO conditions on the non-degenerate event $\mathcal{E} = \{1 \le S \le G-1\}$, the denominator strictly satisfies $\sqrt{S(G-S)} \ge \sqrt{G-1} > 0$. Thus, the perturbation is uniformly bounded by $\mathcal{O}(\alpha_{\text{std}})$.

Taking expectations, the leading coefficient remains bounded up to a step-size perturbation:
\begin{equation*}
    \mathbb{E}[\nabla_\theta \mathcal{J}_{\text{PI}}]
    =
    \kappa(p)\nabla_\theta p + \mathcal{O}(\alpha_{\text{std}}),
    \qquad
    \kappa(p)=\mathcal{O}(1).
\end{equation*}
Thus, near $p \to 0$ or $1$, the PI term damps the leading boundary singularity under the stated local assumptions. In the practical PIPO algorithm, the ideal feedback $\Delta J_t$ is replaced by the standardized empirical feedback $\phi_\lambda(\xi_t)$.
\end{proof}

\end{document}